\newtheorem{theorem}{Theorem}
\newtheorem{proposition}[theorem]{Proposition}
\newtheorem{lemma}[theorem]{Lemma}
\newtheorem{definition}[theorem]{Definition}
\title{Offline Multi-Agent Reinforcement Learning via\\ In-Sample Sequential Policy Optimization}
\author{
    Zongkai~Liu\textsuperscript{\rm 1, \rm3},
    Qian~Lin\textsuperscript{\rm 1},
    Chao~Yu\textsuperscript{\rm 1, \rm 2}\thanks{Corresponding author.}, 
    Xiawei~Wu\textsuperscript{\rm 1}, 
    Yile~Liang\textsuperscript{\rm 4}, 
    Donghui~Li\textsuperscript{\rm 4}, 
    Xuetao~Ding\textsuperscript{\rm 4}
}
\begin{document}

\maketitle

\begin{abstract}

Offline Multi-Agent Reinforcement Learning (MARL) is an emerging field that aims to learn optimal multi-agent policies from pre-collected datasets. Compared to single-agent case, multi-agent setting involves a large joint state-action space and coupled behaviors of multiple agents, which bring extra complexity to offline policy optimization. 
In this work, we revisit the existing offline MARL methods and show that in certain scenarios they can be problematic, leading to uncoordinated behaviors and out-of-distribution (OOD) joint actions. 
To address these issues, we propose a new offline MARL algorithm, named In-Sample Sequential Policy Optimization (InSPO). InSPO sequentially updates each agent's policy in an in-sample manner, which not only avoids selecting OOD joint actions but also carefully considers teammates' updated policies to enhance coordination. Additionally, by thoroughly exploring low-probability actions in the behavior policy, InSPO can well address the issue of premature convergence to sub-optimal solutions. Theoretically, we prove InSPO guarantees monotonic policy improvement and converges to quantal response equilibrium (QRE). Experimental results demonstrate the effectiveness of our method compared to current state-of-the-art offline MARL methods.
%
\begin{links}
    \link{Code}{https://github.com/kkkaiaiai/InSPO/}
\end{links}

\end{abstract}

\section{Introduction}

Offline Reinforcement Learning (RL) is a rapidly evolving field that aims to learn optimal policies from pre-collected datasets without interacting directly with the environment~\cite{offlineSurvey/abs-2203-01387}. The primary challenge in offline RL is the issue of distributional shift~\cite{ICQ/YangMLZZHYZ21}, which occurs when policy evaluation on out-of-distribution (OOD) samples leads to the accumulation of extrapolation errors. Existing research usually tackles this problem by employing conservatism principles, compelling the learning policy to remain close to the data manifold through various data-related regularization techniques~\cite{ICQ/YangMLZZHYZ21, OMAR/PanH0X22, alberdice/MatsunagaLYLAK23, cfcql/ShaoQCZJ23, omiga/WangXZZ23}.

In comparison to the single-agent counterpart, offline Multi-Agent Reinforcement Learning (MARL) has received relatively less attention. Under the multi-agent setting, it not only faces the challenges inherent to offline RL but also encounters common MARL issues, such as difficulties in coordination and large joint-action spaces~\cite{marlSurvey}. 
These issues cannot be simply resolved by combining state-of-the-art offline RL solutions with modern multi-agent techniques~\cite{ICQ/YangMLZZHYZ21}. In fact, due to the increased number of agents and the offline nature of the problem, these issues become even more challenging. For example, under the offline setting, even if each agent selects an in-sample action, the resulting joint action may still be OOD. 
Additionally, in cooperative MARL, agents need to consider both their own actions and the actions of other agents in order to determine their contributions to the global return for high overall performance. Thus, under offline settings, discovering and learning cooperative joint policies from the dataset poses a unique challenge for offline MARL.

To address the aforementioned issues, recent works have developed specific offline MARL algorithms. These approaches generally integrate the conservatism principle into the Centralized Training with Decentralized Execution (CTDE) framework, such as value decomposition structures~\cite{ICQ/YangMLZZHYZ21, OMAR/PanH0X22, alberdice/MatsunagaLYLAK23, cfcql/ShaoQCZJ23, omiga/WangXZZ23}, which is developed under the Individual-Global-Max (IGM) assumption.
Although these approaches have demonstrated successes in certain offline multi-agent tasks, they still exhibit several limitations. 
For example, due to the inherent limitations of the IGM principle, algorithms that utilize value decomposition structures may struggle to find optimal solutions because of constraints in their representation capabilities, and can even lead to the selection of OOD joint actions, as we show in the Proposed Method section.

In this work, we propose a principled approach to tackle OOD joint actions issue. 
By introducing a behavior regularization into the policy learning objective and derive the closed-form solution of the optimal policy, we develop a sequential policy optimization method in an entirely in-sample learning manner without generating potentially OOD actions.
Besides, the sequential update scheme used in this method enhances both the representation capability of the joint policy and the coordination among agents. 
Then, to prevent premature convergence to local optima, we encourage sufficient exploration of low-probability actions in the behavior policy through the use of policy entropy.
The proposed novel algorithm, named the \underline{In}-Sample \underline{S}equential \underline{P}olicy \underline{O}ptimization (InSPO), enjoys the properties of monotonic improvement and convergence to quantal response equilibrium (QRE)~\cite{QRE/mckelvey1995quantal}, a solution concept in game theory. 
We evaluate InSPO in the XOR game, Multi-NE game, and Bridge to demonstrate its effectiveness in addressing OOD joint action and local optimum convergence issues. Additionally, we test it on various types of offline datasets in the StarCraft II micromanagement benchmark to showcase its competitiveness with current state-of-the-art offline MARL algorithms.

\section{Related Work}

\paragraph{MARL. }

The CTDE framework dominates current MARL research, facilitating agent cooperation. In CTDE, agents are centrally trained using global information but rely only on local observations to make decisions. Value decomposition is a notable method, representing the joint Q-function as a combination of individual agents' Q-functions~\cite{qplex/WangRLYZ21, QTRAN/SonKKHY19, QMIX/RashidSWFFW18}. These methods typically depend on the IGM principle, assuming the optimal joint action corresponds to each agent's greedy actions. However, environments with multi-modal reward landscapes frequently violate the IGM assumption, limiting the effectiveness of value decomposition in learning optimal policies~\cite{whyNoIGM/FuYXYW22}.

Another influential class of methods is Multi-Agent Policy Gradient (MAPG), with notable algorithms such as MAPPO\cite{MAPPO/YuVVGWBW22}, CoPPO\cite{CoPPO/WuYYZPZ21}, and HAPPO~\cite{HAPPO/KubaCWWSW022}. However, on-policy learning approaches like these struggle in offline settings due to OOD action issues, leading to extrapolation errors.

\paragraph{Offline MARL. }

OMAR~\cite{OMAR/PanH0X22} combines Independent Learning and zeroth-order optimization to adapt CQL~\cite{CQL/KumarZTL20} for multi-agent scenarios. However, OMAR fundamentally follows a single-agent learning paradigm, which treats other agents as part of the environment, and does not handle cooperative behavior learning and OOD joint actions insufficiently.

To enhance cooperation and efficiency in complex environments such as StarCraft II, some existing works employ value decomposition as a foundation for algorithm design.
For instance, ICQ~\cite{ICQ/YangMLZZHYZ21} introduces conservatism to prevent optimization on unseen state-action pairs, mitigating extrapolation errors.
OMIGA~\cite{omiga/WangXZZ23} and CFCQL~\cite{cfcql/ShaoQCZJ23} are the latest offline MARL methods, both integrating value decomposition structures. 
OMIGA applies implicit local value regularization to enable in-sample learning, while CFCQL calculates counterfactual regularization per agent, avoiding the excessive conservatism caused by direct value decomposition-CQL integration. 
Nonetheless, the IGM principle has been shown to fail in identifying optimal policies in multi-modal reward landscapes~\cite{whyNoIGM/FuYXYW22}, due to the limited expressiveness of the Q-value network, which poses a potential risk of encountering the OOD joint actions issue in offline settings.

An alternative research direction in offline RL applies constraints on state-action distributions, called DIstribution Correction Estimation (DICE) methods~\cite{offlineSurvey/abs-2203-01387}.
AlberDICE~\cite{alberdice/MatsunagaLYLAK23} is a pioneering DICE-based method in offline MARL, which is proved to converge to NEs. However, when multiple NEs exist, its convergence results heavily depends on the dataset distribution. If the behavior policy is near a sub-optimal NE, AlberDICE will converge directly to that sub-optimal solution rather than the global optimum. This is primarily because AlberDICE lacks sufficient exploration of low-probability state-action pairs in dataset, leading to premature convergence to a deterministic policy.
Additionally, AlberDICE employs an out-of-sample learning during policy extraction, i.e., it uses actions produced by the policy rather the actions in datasets, which could lead to OOD joint actions~\cite{IVR/Xu0LYWCZ23, IQL/KostrikovNL22}.

Additionally, some works consider using model-based method~\cite{ModelBasedPPO} or using diffusion models~\cite{diffusion1, madiff} to solve the OOD action issue. 
More discussion about the related work is given in Appendix E.


\section{Background}

\subsection{Cooperative Markov Game}

The cooperative MARL problem is usually modeled as a cooperative Markov game~\cite{MarkovGame/Littman94} $\mathcal{G}=\langle{\mathcal{N}, \mathcal{S},\mathcal{A},P,r,\gamma, d}\rangle$, where
$\mathcal{N}=\{1,\cdots,N\}$ is the set of agent indices, $\mathcal{S}$ is the finite state space, $\mathcal{A}=\prod_{i\in\mathcal{N}} \mathcal{A}^i$ is the joint action space, with $\mathcal{A}_i$ denoting the finite action space of agent $i$, $r: \mathcal{S}\times\mathcal{A}\rightarrow\mathbb{R}$ is the common reward function shared with all agents, $P: \mathcal{S}\times\mathcal{A}\times\mathcal{S}\rightarrow[0, 1]$ is the transition probability function, $\gamma\in[0,1)$ is the discount factor, and $d\in\Delta(\mathcal{S})$ is the initial state distribution. 
At time step $t\in\{1,\cdots,T\}$, each agent $i\in\mathcal{N}$ at state $s_t\in\mathcal{S}$ selects an action $a_t^i\sim\pi^i(\cdot|s_t)$ and moves to the next state $s_{t+1} \sim P(\cdot|s_t, \boldsymbol{a}_t)$. It then receives a reward $r_t=r(s_t,\boldsymbol{a}_t)$ according to the joint action $\boldsymbol{a}_t=\{a_t^1,\cdots,a_t^N\}$. We denote the joint policy as $\boldsymbol{\pi}(\cdot|s)=\prod_{i\in\mathcal{N}}\pi^i(\cdot|s)$, and the joint policy  except the $i$-th player as $\boldsymbol{\pi}^{-i}$. 
In a cooperative Markov game, all agents aim to learn a optimal joint policy $\boldsymbol{\pi}$ that jointly maximizes the expected discount returns $\mathbb{E}_{s\in\mathcal{S}, \boldsymbol{a}\sim\boldsymbol{\pi}}[\sum_{t=0}^{T}\gamma^{t} r(s_t,\boldsymbol{a}_t)]$. 
Under the offline setting, only a pre-collected dataset $\mathcal{D}=\{(s,\boldsymbol{a},r,s^\prime)_{k}\}_{k=1}^{|\mathcal{D}|}$ collected by an unknown behavior policy $\boldsymbol{\mu}=\prod_{i\in\mathcal{N}}\mu^i$ is given and the environment interactions are not allowed.

\subsection{IGM Principle and Value Decomposition}

Value-based methods aim to learn a joint Q-function $\boldsymbol{Q}: \mathcal{S}\times\mathcal{A}\rightarrow\mathbb{R}$ to estimate the future expected return given the current state $s$ and joint action $\boldsymbol{a}$. However, directly computing the joint Q-function is challenging due to the huge state-action space in MARL. 
To address this issue, value decomposition decomposes the joint Q-function $\boldsymbol{Q}$ into individual Q-functions $Q^i$ for each agent: $\boldsymbol{Q}(s,\boldsymbol{a}) = f_\text{mix}(Q^1(s,a^1), \cdots, Q^N(s,a^N); s)$, where $f_\text{mix}$ represents the mixing function conditioned on the state~\cite{whyNoIGM/FuYXYW22}. 
The mixing function $f_\text{mix}$ must satisfy the IGM principle that any optimal joint action $\boldsymbol{a}_*$ should satisfy
\begin{equation}\label{eq: IGM}
    \boldsymbol{a}_*=\mathop{\arg\max}_{\boldsymbol{a}\in\mathcal{A}}\boldsymbol{Q}(s,\boldsymbol{a}) = \bigcup_{i\in\mathcal{N}}\{ \mathop{\arg\max}_{a^i\in\mathcal{A}^i} Q^i(s,a^i) \}.
\end{equation}
Under the IGM assumption, value decomposition enables the identification of the optimal joint action through the greedy actions of each agent.

\subsection{Behavior-Regularized Markov Game in Offline MARL}

Behavior-Regularized Markov Game is a useful framework to avoid distribution shift by incorporating a data-related regularization term on rewards~\cite{omiga/WangXZZ23, IVR/Xu0LYWCZ23}. In this framework, the goal is to optimize policy by
\begin{align}\label{eq: Behavior-Regularized MG}
    \max_{\boldsymbol{\pi}} \mathbb{E}\Big[ \sum_{t=1}^{T} \gamma^{t}\Big( r(s_t, \boldsymbol{a}_t) 
    &- \alpha f(\boldsymbol{\pi}(\cdot|s_t), \boldsymbol{\mu}(\cdot|s_t)) \Big)\Big],
\end{align}
where $f(\cdot, \cdot)$ is a regularization function, and $\alpha \ge 0$ is a temperature constant. The unknown behavior policy $\boldsymbol{\mu}$ here can usually be approximated by using Behavior Cloning~\cite{omiga/WangXZZ23, IVR/Xu0LYWCZ23}.
Policy evaluation operator in this framework is given by
\begin{align}\label{eq: policy evaluation in Behavior regularized MG}
    &\mathcal{T}_{\boldsymbol{\pi}} \boldsymbol{Q}_{\boldsymbol{\pi}}(s, \boldsymbol{a}) \triangleq r(s, \boldsymbol{a}) + \gamma \mathbb{E}_{s'|s,\boldsymbol{a}}\big[ V_{\boldsymbol{\pi}}(s') \big],
    \\\text{where }
    &V_{\boldsymbol{\pi}}(s) = \mathbb{E}_{\boldsymbol{a}\sim\boldsymbol{\pi}} \Big[ \boldsymbol{Q}_{\boldsymbol{\pi}}(s, \boldsymbol{a}) 
    - \alpha f(\boldsymbol{\pi}(\boldsymbol{a}|s), \boldsymbol{\mu}(\boldsymbol{a}|s))  \Big]. 
    \nonumber
\end{align}
Thus, the objective (\ref{eq: Behavior-Regularized MG}) can be represented as
\begin{align}\label{eq: objective in Behavior-Regularized MG}
    \max_{\boldsymbol{\pi}} \mathbb{E}_{\boldsymbol{a}\sim\boldsymbol{\pi}}\Big[ 
    \boldsymbol{Q}_{\boldsymbol{\pi}}(s, \boldsymbol{a}) 
    - \alpha f(\boldsymbol{\pi}(\cdot|s), \boldsymbol{\mu}(\cdot|s)) \Big].
\end{align}

\section{The Proposed Method}

\subsection{OOD Joint Action in Offline MARL}

\begin{figure}[t]
    \centering
    \includegraphics[width=0.65\linewidth]{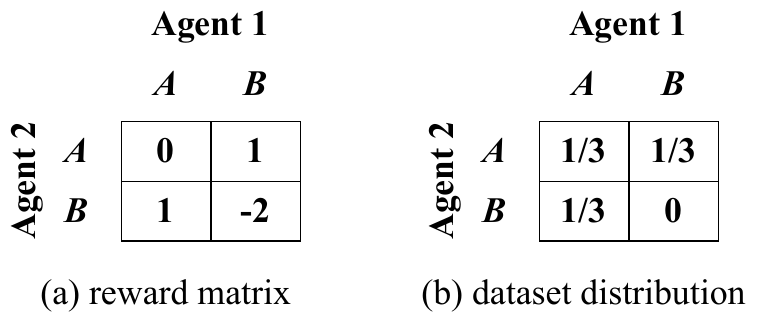}
    \caption{XOR game. (a) is the reward matrix of joint actions. (b) is the distribution of dataset. }
    \label{fig: xor game}
\end{figure}

In offline MARL, value decomposition methods are more prone to encountering OOD joint actions  due to the constraints of the IGM principle in certain scenarios. We use the XOR game, shown in Figure~\ref{fig: xor game}, to illustrate this phenomenon. Figure~\ref{fig: xor game}(a) shows the reward matrix of the XOR game, while Figure~\ref{fig: xor game}(b) depicts the dataset considered in the offline setting.
Since it is necessary to minimize temporal difference (TD) error $\mathbb{E}_{\mathcal{D}}[(f_\text{mix}(Q^1(a^1), Q^2(a^2)) - r(a^1, a^2))^2]$ while satisfying the IGM principle, the local Q-functions for both agents are forced to satisfy $Q^i(B) > Q^i(A),i=1,2$ (See Appendix D for a detailed derivation). As a result, both agents tend to choose action $B$, resulting in the OOD joint action $(B, B)$.

Another line in offline MARL research combines MAPG methods and data-related regularization~\cite{OMAR/PanH0X22}. 
However, they can still encounter the OOD joint actions issue although not constrained by the IGM principle. 
Considering again the above offline task, both learned agents are likely to choose $(A, A)$ due to the data-related regularization. For agent 1, given that its teammate selects action $A$, choosing action $B$ would yield a higher payoff. The same is true for agent 2, resulting in the OOD joint action $(B, B)$.
This situation arises because these methods do not fully consider the change of teammates' policies, leading to conflicting directions in policy updates.

MAPG methods employing sequential update scheme can effectively address this issue, as they fully consider the direction of teammates’ policy updates, thereby avoiding conflicts~\cite{alberdice/MatsunagaLYLAK23, HAPPO/KubaCWWSW022}.
In the same scenario as above, but with sequential updates, where agent 1 updates first followed by agent 2, agent 1 would still choose action B for a higher payoff. Then, when agent 2 updates, knowing that agent 1 chose B, it would find that sticking with action A is best. Consequently, sequential-update MAPG methods converge to the optimal policy.

\subsection{In-Sample Sequential Policy Optimization}

Inspired by the above discussions, we introduce an in-sample sequential policy optimization method under the behavior-regularized Markov game framework, i.e., Eq.(\ref{eq: objective in Behavior-Regularized MG}). 
Here we consider the reverse KL divergence as the regularization, which means $f(x,y)=\log(\frac{x}{y})$. 
The benefit of choosing reverse KL divergence is that the global regularization can be decomposed naturally as $\log(\frac{\boldsymbol{\pi}}{\boldsymbol{\mu}}) = \sum_{i\in\mathcal{N}} \log (\frac{\pi^i}{\mu^i})$, making the simplified computation of sequential-update possible. 
Denoting $i_{1:n}$ as an ordered subset $\{i_1,\cdots,i_n\}$ of $\mathcal{N}$, and $-i_{1:n}$ as its complement, where $i_k$ is the $k$-th agent in the ordered subset and $i_{1:0}=\emptyset$, the sequential-update objectives are given by:
\begin{align}\label{eq: sequential-update objective 1}
    \pi_\text{new}^{i_n}={\arg\max}_{\pi^{i_n}} \mathbb{E}_{a^{i_n}\sim\pi^{i_n}}\Big[ &
    Q^{i_{1:n}}_{\boldsymbol{\pi}_\text{old}}(s, a^{i_n}) 
    \nonumber\\&- 
    \alpha \log(\frac{\pi^{i_n}(a^{i_n}|s)}{\mu^{i_n}(a^{i_n}|s)} ) \Big],
\end{align}
 where
\begin{align*}
    Q^{i_{1:n}}_{\boldsymbol{\pi}_\text{old}}(s, a^{i_n})
    \triangleq
    \mathbb{E}_{\boldsymbol{\pi}_\text{new}^{i_{1:n-1}},\boldsymbol{\pi}_\text{old}^{-i_{1:n}}}\Big[ \boldsymbol{Q}_{\boldsymbol{\pi_\text{old}}}(s, \boldsymbol{a}^{-i_{n}}, a^{i_n}) \Big]. 
\end{align*}
However, the optimization objective (\ref{eq: sequential-update objective 1}) requires actions produced by the policy, which is in a out-of-sample learning manner, potentially leading to OOD actions. 
In order to achieve in-sample learning using only the dataset actions, we derive the closed-form solution of objectives (\ref{eq: sequential-update objective 1}) by the Karush-Kuhn-Tucker (KKT) conditions
\begin{align}
    \pi_\text{new}^{i_n}(a^{i_n}|s) \propto
    \mu^{i_n}(a^{i_n}|s) \cdot 
    \exp\Big(
    \frac{Q^{i_{1:n}}_{\boldsymbol{\pi}_\text{old}}(s, a^{i_n})}{\alpha}
    \Big),
\end{align}
and thus obtain the in-sample optimization objectives for parametric policy $\pi_{\theta^{i_n}}$ by minimizing the KL divergence:
\begin{align}\label{eq: in-sample 1}
    \theta^{i_n}_\text{new} = \mathop{\arg\min}_{\theta^{i_n}}&
    D_\text{KL}(\pi_\text{new}^{i_n}(\cdot|s), \pi_{\theta^{i_n}}(\cdot|s))
    \nonumber\\
    =\mathop{\arg\min}_{\theta^{i_n}}&
    \mathbb{E}_{(s,a^{i_n})\sim\mathcal{D}}\Big[
    -\exp\Big(
    \frac{A^{i_{1:n}}_{\boldsymbol{\pi}_\text{old}}(s, a^{i_n})}{\alpha}
    \Big)
    \nonumber\\&\cdot
    \log \pi_{\theta^{i_n}}(a^{i_n}|s)
    \Big],
\end{align}
where $A^{i_{1:n}}_{\boldsymbol{\pi}_\text{old}}(s, a^{i_n})
    \triangleq
    Q^{i_{1:n}}_{\boldsymbol{\pi}_\text{old}}(s, a^{i_n}) - 
    \mathbb{E}_{\pi^{i_n}_\text{new}} [ Q^{i_{1:n}}_{\boldsymbol{\pi}_\text{old}}(s, a^{i_n}) ]$. 

A potential problem of this method is that it may lead to premature convergence to local optima due to exploitation of vested interests. This concern is especially pronounced when the behavior policy is a local optimum, as we will show in the next subsection.

\subsection{Maximum-Entropy Behavior-Regularized Markov Game}

\begin{figure}[t]
    \centering
    \includegraphics[width=0.7\linewidth]{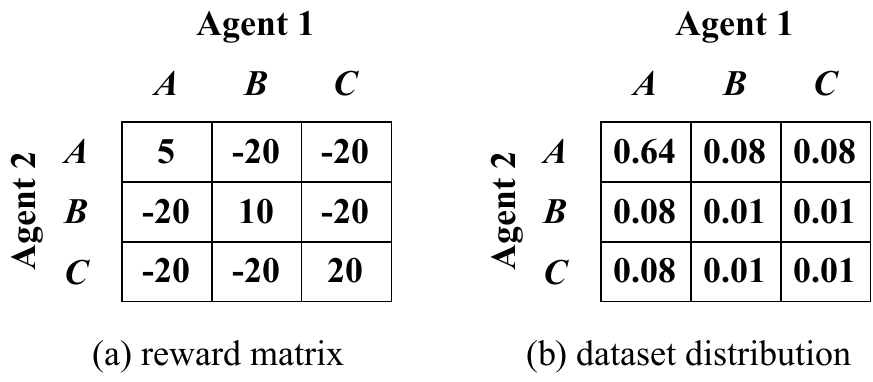}
    \caption{M-NE game. (a) is the reward matrix of joint actions. (b) is the distribution of dataset. }
    \label{fig: M-NE game}
\end{figure}

The existence of multiple local optima is a common phenomenon in many multi-agent tasks, where finding the global optimum is often extremely challenging. Therefore, near-optimal (or expert) behavior policies can easily fall into or stay near local optima.
In such cases, because the data-related regularization enforces the learned policy to remain close to the behavior policy, optimizing the objective in Eq.(\ref{eq: sequential-update objective 1}) is more likely to cause the sequential policy optimization method to converge towards a deterministic policy that exploits this local optimum. 
Moreover, escaping this local optimum becomes challenging, as when one of the agents attempts to deviate unilaterally, the optimization objective~(\ref{eq: sequential-update objective 1}) impedes this since it hurts the overall benefits.

We examine this issue using the M-NE game depicted in Figure~\ref{fig: M-NE game}, with Figure~\ref{fig: M-NE game}(a) showing the reward matrix and Figure~\ref{fig: M-NE game}(b) illustrating the offline dataset.
In this game, there are three NEs: $(A, A)$, $(B, B)$, and $(C, C)$, with rewards of 5, 10, and 20, respectively, where $(C, C)$ represents the global optimal NE and other NEs are local optima. On the considered dataset, data-related regularization enforces agents to select $A$ with a high probability. As a result, agents confidently converge to the local optimum $(A, A)$ based on the observed high probability of their teammates choosing $A$, failing to recognize the optimal joint action $(C, C)$.

One way to address this issue is to introduce perturbations to the rewards, preventing sequential policy optimization method from deterministically converging to a local optimum and thereby encouraging it to escape the local optimum and identify the global optimal solution. 
From a game-theoretic perspective, the optimal solution of the perturbed game aligns with the solution concept of quantal response equilibrium (QRE)~\cite{QRE/mckelvey1995quantal}. 
\begin{definition}\label{def: QRE}
    For a Behavior-Regularized Markov Game $\mathcal{G}$ with a reward function $r$, denote the perturbed reward as $\tilde{r}$. 
    Then, a joint policy $\boldsymbol{\pi}_*$ is a QRE if it holds
    \begin{align}
        \tilde{J}(\boldsymbol{\pi}_*)\ge \tilde{J}(\boldsymbol{\pi}_*^{-i}, \pi^i), 
        \quad \forall{i}\in\mathcal{N}, \pi^i, 
    \end{align}
    where $\tilde{J}(\boldsymbol{\pi})\triangleq \mathbb{E}_{\boldsymbol{\pi}}[ \sum_{t} \gamma^{t}( \tilde{r}(s_t, \boldsymbol{a}_t) 
    - \alpha f(\boldsymbol{\pi}(\cdot|s_t), \boldsymbol{\mu}(\cdot|s_t)) )]$.
\end{definition}
Therefore, our goal is to design an in-sample sequential policy optimization method with QRE convergence guarantees. 
One simple and effective way to introduce disturbances is to add policy entropy into the rewards, which is also a commonly used regularization in online RL to improve exploration~\cite{HASAC/LiuZHFFC024, sac}.
Therefore, we introduce the following Maximum-Entropy Behavior-Regularized Markov Game (MEBR-MG) problem, which is a generalization of Behavior-Regularized Markov Game~(\ref{eq: Behavior-Regularized MG}). 
\begin{align}\label{eq: MEBR-MG}
    \max_{\boldsymbol{\pi}} \mathbb{E}\Big[ \sum_{t=1}^{T} \gamma^{t}\Big( r(s_t, \boldsymbol{a}_t) 
    &- \alpha D_\text{KL}(\boldsymbol{\pi}(\cdot|s_t), \boldsymbol{\mu}(\cdot|s_t))
    \nonumber
    \\&+ 
    \beta \mathcal{H}(\boldsymbol{\pi}(\cdot|s_t)) \Big)\Big],
\end{align}
where $\mathcal{H}(\boldsymbol{\pi}(\cdot|s_t))$ is policy entropy, and $\beta\ge0$ is a temperature constant. 
In the following context, we first give some facts about MEBR-MG, and then give the in-sample sequential policy optimization method under MEBR-MG. 

In MEBR-MG, we have the following modified policy evaluation operator given by:
\begin{align}\label{eq: policy evaluation operator}
    &\mathcal{T}_{\boldsymbol{\pi}} \boldsymbol{Q}_{\boldsymbol{\pi}}(s, \boldsymbol{a}) \triangleq r(s, \boldsymbol{a}) + \gamma \mathbb{E}_{s'|s,\boldsymbol{a}}\big[ V_{\boldsymbol{\pi}}(s') \big],
\end{align}
where
\begin{align*}
    V_{\boldsymbol{\pi}}(s) =& \mathbb{E}_{\boldsymbol{a}\sim\boldsymbol{\pi}} \Big[ \boldsymbol{Q}_{\boldsymbol{\pi}}(s, \boldsymbol{a}) 
    \nonumber
    \\&- \sum_{i\in\mathcal{N}}\Big(\alpha \log \frac{{\pi}^{i}(a^i|s)}{\mu^i(a^i|s)}  
    +\beta \log {\pi}^i(a^i|s)\Big) \Big]. 
\end{align*}

\begin{lemma}\label{lemma: policy evaluation}
    Given a policy $\boldsymbol{\pi}$, consider the modified policy evaluation operator $\mathcal{T}_{\boldsymbol{\pi}}$ in Eq.(\ref{eq: policy evaluation operator}) and a initial Q-function $\boldsymbol{Q}_0: \mathcal{S}\times\mathcal{A}\rightarrow\mathbb{R}$, and define $\boldsymbol{Q}_{k+1}=\mathcal{T}_{\boldsymbol{\pi}} \boldsymbol{Q}_{k}$. 
    Then the sequence $\boldsymbol{Q}_{k}$ will converge to the Q-function $\boldsymbol{Q}_{\boldsymbol{\pi}}$ of policy $\boldsymbol{\pi}$ as $k\rightarrow\infty$. 
\end{lemma}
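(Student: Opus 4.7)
The plan is to apply Banach's fixed point theorem: verify (i) that $\mathcal{T}_{\boldsymbol{\pi}}$ is a $\gamma$-contraction in the sup-norm on the space of bounded functions on $\mathcal{S}\times\mathcal{A}$, and (ii) that its unique fixed point coincides with the Q-function $\boldsymbol{Q}_{\boldsymbol{\pi}}$ induced by the MEBR-MG objective~(\ref{eq: MEBR-MG}).

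For the contraction step, the key observation is that the penalty terms inside $V_{\boldsymbol{\pi}}(s)$, namely $\alpha\log(\pi^i/\mu^i)$ and $\beta\log\pi^i$, depend only on the fixed policies $\boldsymbol{\pi}$ and $\boldsymbol{\mu}$ and are completely independent of the iterate $\boldsymbol{Q}$. Hence for any two bounded functions $\boldsymbol{Q},\boldsymbol{Q}'$, these terms cancel exactly in the difference, leaving
\begin{equation*}
(\mathcal{T}_{\boldsymbol{\pi}}\boldsymbol{Q} - \mathcal{T}_{\boldsymbol{\pi}}\boldsymbol{Q}')(s,\boldsymbol{a}) \;=\; \gamma\,\mathbb{E}_{s'|s,\boldsymbol{a}}\,\mathbb{E}_{\boldsymbol{a}'\sim\boldsymbol{\pi}(\cdot|s')}\bigl[\boldsymbol{Q}(s',\boldsymbol{a}') - \boldsymbol{Q}'(s',\boldsymbol{a}')\bigr].
\end{equation*}
Taking absolute values, applying Jensen's inequality against each of the two probability measures, and then taking the supremum over $(s,\boldsymbol{a})$ gives $\|\mathcal{T}_{\boldsymbol{\pi}}\boldsymbol{Q} - \mathcal{T}_{\boldsymbol{\pi}}\boldsymbol{Q}'\|_\infty \le \gamma\,\|\boldsymbol{Q}-\boldsymbol{Q}'\|_\infty$. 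Since $\gamma\in[0,1)$ and $(\mathbb{R}^{\mathcal{S}\times\mathcal{A}},\|\cdot\|_\infty)$ is complete, Banach's fixed point theorem yields a unique fixed point $\boldsymbol{Q}_\star$ to which the iterates $\boldsymbol{Q}_{k+1}=\mathcal{T}_{\boldsymbol{\pi}}\boldsymbol{Q}_k$ converge from any bounded initialization $\boldsymbol{Q}_0$.

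For the identification step, one unrolls the definition of $\boldsymbol{Q}_{\boldsymbol{\pi}}$ implied by~(\ref{eq: MEBR-MG}): conditioning on $(s_1,\boldsymbol{a}_1)=(s,\boldsymbol{a})$ and separating the immediate reward $r(s,\boldsymbol{a})$ from the discounted tail, where all future per-step KL and entropy penalties are first charged at the next state $s'$ under $\boldsymbol{a}'\sim\boldsymbol{\pi}(\cdot|s')$. This is exactly $r(s,\boldsymbol{a})+\gamma\mathbb{E}_{s'|s,\boldsymbol{a}}[V_{\boldsymbol{\pi}}(s')]$, so $\boldsymbol{Q}_{\boldsymbol{\pi}}$ satisfies the Bellman equation $\boldsymbol{Q}_{\boldsymbol{\pi}}=\mathcal{T}_{\boldsymbol{\pi}}\boldsymbol{Q}_{\boldsymbol{\pi}}$, and by uniqueness of the fixed point $\boldsymbol{Q}_\star = \boldsymbol{Q}_{\boldsymbol{\pi}}$.

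The main obstacle is not technical but rather careful bookkeeping in the identification step: one must confirm that the convention adopted in the definition of $V_{\boldsymbol{\pi}}$ (charging the KL and entropy at the state where the action is sampled) is the same convention used when assembling the infinite-horizon sum in~(\ref{eq: MEBR-MG}), so that the one-step unrolling lines up without off-by-one errors in the $\gamma$-discounting of the penalty terms. The contraction argument itself is essentially automatic precisely because the regularizer is $\boldsymbol{Q}$-independent, so the proof parallels the classical tabular policy-evaluation argument with no new ingredients beyond this observation.
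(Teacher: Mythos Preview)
Your proposal is correct and is essentially the same argument as the paper's: both hinge on the observation that the KL and entropy terms are independent of $\boldsymbol{Q}$, so the operator reduces to a standard Bellman backup with a modified (pseudo-)reward. The paper simply absorbs these terms into $r_{\boldsymbol{\pi}}(s,\boldsymbol{a}) \triangleq r(s,\boldsymbol{a}) - \gamma\,\mathbb{E}_{s'|s,\boldsymbol{a}}\bigl[\alpha D_{\mathrm{KL}}(\boldsymbol{\pi}(\cdot|s'),\boldsymbol{\mu}(\cdot|s')) - \beta\,\mathcal{H}(\boldsymbol{\pi}(\cdot|s'))\bigr]$ and then cites the standard policy-evaluation convergence result, whereas you spell out the underlying contraction argument explicitly.
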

Proof can be found in Appendix A.
This lemma indicates Q-function will converge to the Q-value under the joint policy $\boldsymbol{\pi}$ by repeatedly applying the policy evaluation operator. 

Moreover, the additional smoothness introduced by the regularization term allows the QRE of MEBR-MG to be expressed in the form of Boltzmann distributions, as demonstrated by the following Proposition~\ref{prop: representation of optimal policy}.
\begin{proposition}\label{prop: representation of optimal policy}
    In a MEBR-MG, a joint policy $\boldsymbol{\pi}_*$ is a QRE if it holds
    \begin{align}
        V_{\boldsymbol{\pi}_{*}}(s) \ge V_{\pi^i, \boldsymbol{\pi}_{*}^{-i}}(s), 
        \quad \forall{i}\in\mathcal{N}, \pi^i, s \in \mathcal{S}. 
    \end{align}
    Then the QRE policies for each agent $i$ are given by
    \begin{align}\label{eq: QRE}
        \pi^{i}_*(a^i|s) &\propto 
        \mu^i(a^i|s) 
        \nonumber\\
        \cdot\exp &\Big( \frac{\mathbb{E}_{\boldsymbol{a}^{-i}\sim\boldsymbol{\pi}^{-i}_{*}}[\boldsymbol{Q}_{\boldsymbol{\pi}_*}(s, a^i, \boldsymbol{a}^{-i})] - \beta \log \mu^i(a^i|s)}{\alpha + \beta} \Big)
    \end{align}
\end{proposition}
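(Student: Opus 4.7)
The plan is to derive the closed-form Boltzmann expression by viewing the QRE condition as a per-state best-response for each agent and then applying KKT to a single-agent regularized maximization. The key observation is that the reverse-KL-plus-entropy regularization is additively separable across agents, so once $\boldsymbol{\pi}_*^{-i}$ is fixed, agent $i$ faces a standard regularized single-agent MDP with effective inverse temperature $\alpha+\beta$.

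First, I would fix agent $i$ and its opponents' policies $\boldsymbol{\pi}_*^{-i}$. By the hypothesized pointwise optimality of $\pi^i_*$ (which, since $\tilde{J}(\boldsymbol{\pi})=\mathbb{E}_{s\sim d}[V_{\boldsymbol{\pi}}(s)]$, is at least as strong as the QRE definition), $\pi^i_*$ is the best response for the single-agent MDP whose Bellman operator is obtained by marginalizing $\boldsymbol{a}^{-i}$ out of the operator in Lemma~\ref{lemma: policy evaluation}. Rewriting $-\alpha\log(\pi^i/\mu^i)-\beta\log\pi^i$ as $-(\alpha+\beta)\log\pi^i+\alpha\log\mu^i$ and absorbing the other agents' regularizer contributions into constants in $\pi^i$, this reduces to a soft-maximum Bellman problem at temperature $\alpha+\beta$. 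By a soft-policy-iteration argument grounded in Lemma~\ref{lemma: policy evaluation}, the global best-response condition reduces to per-state optimality: $\pi^i_*(\cdot|s)$ solves $\max_{p\in\Delta(\mathcal{A}^i)}\mathbb{E}_{a^i\sim p}[Q^i_*(s,a^i)+\alpha\log\mu^i(a^i|s)-(\alpha+\beta)\log p(a^i)]$, where $Q^i_*(s,a^i)\triangleq\mathbb{E}_{\boldsymbol{a}^{-i}\sim\boldsymbol{\pi}_*^{-i}}[\boldsymbol{Q}_{\boldsymbol{\pi}_*}(s,a^i,\boldsymbol{a}^{-i})]$.

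Next, I would introduce a Lagrange multiplier $\lambda$ for the simplex constraint $\sum_{a^i}p(a^i)=1$. Strict concavity of $-x\log x$ makes the interior solution the unique maximizer (no boundary case since $\log 0=-\infty$), so stationarity reads $Q^i_*(s,a^i)+\alpha\log\mu^i(a^i|s)-(\alpha+\beta)(\log\pi^i_*(a^i|s)+1)-\lambda=0$. Solving for $\pi^i_*$ and absorbing the constant in $\lambda$ into the normalization yields $\pi^i_*(a^i|s)\propto(\mu^i(a^i|s))^{\alpha/(\alpha+\beta)}\exp(Q^i_*(s,a^i)/(\alpha+\beta))$. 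Pulling one full factor of $\mu^i(a^i|s)$ out of the base term rewrites this as $\mu^i(a^i|s)\exp\big((Q^i_*(s,a^i)-\beta\log\mu^i(a^i|s))/(\alpha+\beta)\big)$, matching Eq.~(\ref{eq: QRE}).

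The main obstacle is the reduction from the global best-response hypothesis to the per-state one-shot maximization. The cleanest route is to let $\tilde{\pi}^i$ denote the per-state KKT maximizer, perform one soft policy improvement step from $\pi^i_*$ to $\tilde{\pi}^i$, invoke Lemma~\ref{lemma: policy evaluation} to get $V_{\tilde{\pi}^i,\boldsymbol{\pi}_*^{-i}}(s)\ge V_{\pi^i_*,\boldsymbol{\pi}_*^{-i}}(s)$ pointwise, and then use the hypothesized maximality of $\pi^i_*$ to force equality. Uniqueness of the per-state maximizer then gives $\pi^i_*=\tilde{\pi}^i$. The remaining steps, i.e., identifying the Bellman structure and solving the KKT system, are routine.
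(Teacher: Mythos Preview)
Your proposal is correct and follows essentially the same approach as the paper: both set up the per-state regularized maximization for agent $i$ with $\boldsymbol{\pi}_*^{-i}$ fixed, form the Lagrangian with a multiplier on the simplex constraint, and solve the KKT stationarity condition to obtain the Boltzmann form. You are in fact more careful than the paper about justifying the reduction from the pointwise value-function inequality to the one-shot per-state problem (via the soft-improvement-plus-uniqueness argument), a step the paper simply takes for granted by writing down the constrained maximization directly.
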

Proof can be found in Appendix A.
Eq.(\ref{eq: QRE}) for QRE demonstrates that incorporating the reverse KL divergence term ensures that the learned policy shares the same support set as the behavior policy, thereby avoiding OOD actions; and the addition of an entropy term allows the policy to place greater emphasis on actions with lower probabilities in the behavior policy, preventing premature convergence to local optima.

\begin{algorithm}[tb]
\caption{InSPO}
\label{alg: InSPO}
\textbf{Input}: Offline dataset $\mathcal{D}$, initial policy $\boldsymbol{\pi}_0$ and Q-function $\boldsymbol{Q}_0$\\
\textbf{Output}: $\boldsymbol{\pi}_{K}$
\begin{algorithmic}[1] 
\STATE Compute behavior policy $\boldsymbol{\mu}$ by simple Behavior Cloning
\FOR{$k = 1, \cdots, K$}
    \STATE Compute $\boldsymbol{Q}_{k}$ by Eq.(\ref{eq: policy evaluation operator})
    \STATE Draw a permutation $i_{1:N}$ of agents at random
    \FOR{$n=1,\cdots,N$}
        \STATE Update $\boldsymbol{\pi}_{k}^{i_n}$ by Eq.(\ref{eq: in-sample 2})
    \ENDFOR
\ENDFOR
\end{algorithmic}
\end{algorithm}

Similar to Eq.(\ref{eq: in-sample 1}), the in-sample sequential policy optimization procedure under MEBR-MG is given by:
\begin{align}\label{eq: in-sample 2}
    &\theta^{i_n}_\text{new} 
    =\mathop{\arg\min}_{\theta^{i_n}}
    \mathbb{E}_{(s,a^{i_n})\sim\mathcal{D}}\Big[
    \nonumber\\&
    -\exp\Big(
    \frac{A^{i_{1:n}}_{\boldsymbol{\pi}_\text{old}}(s, a^{i_n}) - \beta \log \mu^{i_n}(a^{i_n}|s)}{\alpha+\beta}
    \Big)
    \cdot
    \log \pi_{\theta^{i_n}}(a^{i_n}|s)
    \Big]. 
\end{align}

\begin{proposition}\label{prop: policy improvement}
    The sequential policy optimization procedure under MEBR-MG guarantees policy improvement, i.e., $\forall{s}\in\mathcal{S}, a\in\mathcal{A}$, 
    \begin{align*}
        &\boldsymbol{Q}_{\boldsymbol{\pi}_\text{new}}(s, \boldsymbol{a})
        \ge
        \boldsymbol{Q}_{\boldsymbol{\pi}_\text{old}}(s, \boldsymbol{a}) 
        ,
        V_{\boldsymbol{\pi}_\text{new}}(s) \ge V_{\boldsymbol{\pi}_\text{old}}(s). 
    \end{align*}
\end{proposition}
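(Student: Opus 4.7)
The plan is to extend the soft policy-improvement argument from single-agent maximum-entropy RL to the sequential multi-agent setting, exploiting the fact that both the reverse KL regularizer and the policy entropy decompose additively across agents since $\boldsymbol{\pi}=\prod_i \pi^i$. First I would introduce the mixed intermediate policies $\boldsymbol{\pi}^{(n)} \triangleq (\boldsymbol{\pi}_\text{new}^{i_{1:n}}, \boldsymbol{\pi}_\text{old}^{-i_{1:n}})$ with $\boldsymbol{\pi}^{(0)}=\boldsymbol{\pi}_\text{old}$ and $\boldsymbol{\pi}^{(N)}=\boldsymbol{\pi}_\text{new}$, together with an auxiliary ``frozen-critic'' value $U^{(n)}(s) \triangleq \mathbb{E}_{\boldsymbol{a}\sim\boldsymbol{\pi}^{(n)}}[\boldsymbol{Q}_{\boldsymbol{\pi}_\text{old}}(s,\boldsymbol{a}) - \sum_i(\alpha \log(\pi^{(n),i}(a^i|s)/\mu^i(a^i|s)) + \beta \log \pi^{(n),i}(a^i|s))]$, so that $U^{(0)}(s)=V_{\boldsymbol{\pi}_\text{old}}(s)$ by the definition of $V$ in Eq.(\ref{eq: policy evaluation operator}). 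The goal is then to show $U^{(N)}(s)\ge V_{\boldsymbol{\pi}_\text{old}}(s)$ and use this to drive a Bellman-based contraction argument.

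Second, I would prove the one-step inequality $U^{(n)}(s) \ge U^{(n-1)}(s)$ for each $n$. Between $\boldsymbol{\pi}^{(n-1)}$ and $\boldsymbol{\pi}^{(n)}$, only agent $i_n$'s marginal changes from $\pi_\text{old}^{i_n}$ to $\pi_\text{new}^{i_n}$, so after conditioning on $\boldsymbol{a}^{-i_n}$ the teammate regularizers cancel and the difference $U^{(n)}-U^{(n-1)}$ collapses to the gap of the per-agent regularized objective $J_n(\pi^{i_n}) \triangleq \mathbb{E}_{a^{i_n}\sim \pi^{i_n}}[Q^{i_{1:n}}_{\boldsymbol{\pi}_\text{old}}(s,a^{i_n}) - \alpha \log(\pi^{i_n}(a^{i_n}|s)/\mu^{i_n}(a^{i_n}|s)) - \beta \log \pi^{i_n}(a^{i_n}|s)]$ between $\pi_\text{new}^{i_n}$ and $\pi_\text{old}^{i_n}$; the identity $\mathbb{E}_{\boldsymbol{a}^{-i_n}}[\boldsymbol{Q}_{\boldsymbol{\pi}_\text{old}}]=Q^{i_{1:n}}_{\boldsymbol{\pi}_\text{old}}$ is exactly the definition given just after Eq.(\ref{eq: sequential-update objective 1}). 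Re-running the KKT derivation that produced Eq.(\ref{eq: QRE}) on $J_n$ shows that the Boltzmann closed form underlying Eq.(\ref{eq: in-sample 2}) is the unique maximizer of the strictly concave $J_n$, so the gap is non-negative. Telescoping over $n=1,\ldots,N$ yields $U^{(N)}(s)\ge V_{\boldsymbol{\pi}_\text{old}}(s)$ for every $s$.

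Third, I would lift this single-sweep improvement to the true value functions through the Bellman operator of Lemma~\ref{lemma: policy evaluation}. By Eq.(\ref{eq: policy evaluation operator}), $\boldsymbol{Q}_{\boldsymbol{\pi}_\text{old}}(s,\boldsymbol{a}) = r(s,\boldsymbol{a}) + \gamma \mathbb{E}_{s'|s,\boldsymbol{a}}[V_{\boldsymbol{\pi}_\text{old}}(s')] \le r(s,\boldsymbol{a}) + \gamma \mathbb{E}_{s'|s,\boldsymbol{a}}[U^{(N)}(s')] = (\mathcal{T}_{\boldsymbol{\pi}_\text{new}} \boldsymbol{Q}_{\boldsymbol{\pi}_\text{old}})(s,\boldsymbol{a})$. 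Since $\mathcal{T}_{\boldsymbol{\pi}_\text{new}}$ is a monotone $\gamma$-contraction with fixed point $\boldsymbol{Q}_{\boldsymbol{\pi}_\text{new}}$, iterating this inequality gives $\boldsymbol{Q}_{\boldsymbol{\pi}_\text{old}} \le \mathcal{T}_{\boldsymbol{\pi}_\text{new}}^{k} \boldsymbol{Q}_{\boldsymbol{\pi}_\text{old}} \to \boldsymbol{Q}_{\boldsymbol{\pi}_\text{new}}$, which is the desired $\boldsymbol{Q}_{\boldsymbol{\pi}_\text{new}}(s,\boldsymbol{a})\ge\boldsymbol{Q}_{\boldsymbol{\pi}_\text{old}}(s,\boldsymbol{a})$. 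The $V$-inequality then follows by writing $V_{\boldsymbol{\pi}_\text{new}}(s)=\mathbb{E}_{\boldsymbol{a}\sim\boldsymbol{\pi}_\text{new}}[\boldsymbol{Q}_{\boldsymbol{\pi}_\text{new}}(s,\boldsymbol{a})-\text{regs}(\boldsymbol{\pi}_\text{new})] \ge \mathbb{E}_{\boldsymbol{a}\sim\boldsymbol{\pi}_\text{new}}[\boldsymbol{Q}_{\boldsymbol{\pi}_\text{old}}(s,\boldsymbol{a})-\text{regs}(\boldsymbol{\pi}_\text{new})]=U^{(N)}(s)\ge V_{\boldsymbol{\pi}_\text{old}}(s)$.

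The hard part will be the one-step algebra in the second step: carefully verifying that, after conditioning on the teammates' actions under the mixed policy, the increment $U^{(n)}-U^{(n-1)}$ really does collapse to $J_n(\pi_\text{new}^{i_n})-J_n(\pi_\text{old}^{i_n})$ with no cross-terms. This relies on (i) the joint-policy factorization so that the joint expectation splits in the prescribed agent order, (ii) the additive decomposability $\log(\boldsymbol{\pi}/\boldsymbol{\mu})=\sum_i \log(\pi^i/\mu^i)$ and $\mathcal{H}(\boldsymbol{\pi})=\sum_i \mathcal{H}(\pi^i)$, and (iii) the identification of $\mathbb{E}_{\boldsymbol{a}^{-i_n}\sim\boldsymbol{\pi}^{(n-1),-i_n}}[\boldsymbol{Q}_{\boldsymbol{\pi}_\text{old}}]$ with $Q^{i_{1:n}}_{\boldsymbol{\pi}_\text{old}}$. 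Once this bookkeeping is in place, the contraction/telescoping steps are essentially routine adaptations of the standard soft policy improvement proof.
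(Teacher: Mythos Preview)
Your proof is correct and shares the same overall architecture as the paper: first establish the ``frozen-critic'' inequality $L_{\boldsymbol{\pi}_\text{old}}(\boldsymbol{\pi}_\text{new})\ge V_{\boldsymbol{\pi}_\text{old}}(s)$ (your $U^{(N)}(s)\ge U^{(0)}(s)$), and then unroll the Bellman operator $\mathcal{T}_{\boldsymbol{\pi}_\text{new}}$ to pass from $\boldsymbol{Q}_{\boldsymbol{\pi}_\text{old}}$ to $\boldsymbol{Q}_{\boldsymbol{\pi}_\text{new}}$; the $V$-inequality is then derived in the same two-line fashion. Where you differ is in the first part. The paper argues by contradiction that the sequentially-produced $\boldsymbol{\pi}_\text{new}$ is in fact the \emph{global} maximizer of $L_{\boldsymbol{\pi}_\text{old}}(\boldsymbol{\pi})$ over all product policies, invoking the Multi-Agent Advantage Decomposition lemma of \citet{HAPPO/KubaCWWSW022} to combine the per-agent optimality conditions into a statement about the joint objective. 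You instead prove only the weaker inequality that is actually needed, $L_{\boldsymbol{\pi}_\text{old}}(\boldsymbol{\pi}_\text{new})\ge L_{\boldsymbol{\pi}_\text{old}}(\boldsymbol{\pi}_\text{old})$, by introducing the intermediate mixed policies $\boldsymbol{\pi}^{(n)}$ and telescoping the per-agent gains $J_n(\pi^{i_n}_\text{new})-J_n(\pi^{i_n}_\text{old})\ge 0$. Your route is more elementary (no external advantage-decomposition lemma, no contradiction) and makes the bookkeeping for the cancellation of teammate regularizers explicit; the paper's route gives a stronger intermediate conclusion (joint optimality of the sequential update) that is not required for the proposition itself but is of independent interest.
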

Proof can be found in Appendix A. 
Proposition~\ref{prop: policy improvement} demonstrates that the policy improvement step defined in Eq.(\ref{eq: in-sample 2}) ensures a monotonic increase in performance at each iteration. By alternating between the policy evaluation step and the policy improvement step, we derive InSPO, as shown in Algorithm~\ref{alg: InSPO}, and we furthermore prove that InSPO converges to QRE as follows. 
\begin{theorem}\label{thrm: converge to QRE}
    Joint policy $\boldsymbol{\pi}$ updated by Algorithm~\ref{alg: InSPO} converges to QRE. 
\end{theorem}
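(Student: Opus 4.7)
The plan is to combine the monotone improvement of Proposition~\ref{prop: policy improvement} with a fixed-point analysis exploiting the Boltzmann characterization of QRE in Proposition~\ref{prop: representation of optimal policy}. To start, $\{V_{\boldsymbol{\pi}_k}(s)\}_{k\ge 0}$ is non-decreasing by Proposition~\ref{prop: policy improvement} and uniformly bounded above---the reward is bounded on the finite state-action space, the closed-form update in Eq.(\ref{eq: in-sample 2}) keeps every $\pi^i$ supported inside $\mathrm{supp}(\mu^i)$ which bounds the reverse-KL regularizer, and the entropy regularizer is bounded by $\log|\mathcal{A}^i|$---so $V_{\boldsymbol{\pi}_k}\to V_*$ pointwise, and Lemma~\ref{lemma: policy evaluation} gives $\boldsymbol{Q}_{\boldsymbol{\pi}_k}\to\boldsymbol{Q}_*$.

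Next, I would fix an arbitrary permutation $\sigma$ of the agents. Each permutation is drawn with probability $1/N!$ at every iteration of line~4 of Algorithm~\ref{alg: InSPO}, so almost surely $\sigma$ is sampled at infinitely many indices $\{k_j\}$. By compactness of the joint simplex I extract a further subsequence along which $\boldsymbol{\pi}_{k_j}\to\boldsymbol{\pi}_*$. The sequential in-sample update map is continuous on policies supported in $\mathrm{supp}(\boldsymbol{\mu})$ (finite sums, exponentials, and normalizations by strictly positive quantities), so $\boldsymbol{\pi}_{k_j+1}$ converges as well. Because $V_{\boldsymbol{\pi}_{k_j+1}}\to V_*$, the one-sweep improvement vanishes in the limit; since that improvement decomposes as a sum of non-negative per-agent improvements (by Proposition~\ref{prop: policy improvement}), each per-agent improvement must vanish individually, and strict concavity of the $(\alpha+\beta)$-regularized per-agent objective forces each per-agent update to be trivial. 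Thus $\boldsymbol{\pi}_*$ is a fixed point of the $\sigma$-ordered sequential update.

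I would then unpack this fixed-point condition inductively along $\sigma=(i_1,\ldots,i_N)$. Invariance under the first step forces $\pi_*^{i_1}$ to equal the Boltzmann best response to $\boldsymbol{\pi}_*^{-i_1}$; since $\pi_*^{i_1}$ does not change, the second step acts against $\boldsymbol{\pi}_*^{-i_2}$ and similarly forces $\pi_*^{i_2}$ to be the Boltzmann best response, and so on. Hence, for every agent $i$, $\pi_*^{i}$ satisfies Eq.(\ref{eq: QRE}) with Q-function $\boldsymbol{Q}_*$, and Proposition~\ref{prop: representation of optimal policy} certifies $\boldsymbol{\pi}_*$ as a QRE. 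Because the Boltzmann best response is uniquely pinned down by $\boldsymbol{Q}_*$, every accumulation point of $\{\boldsymbol{\pi}_k\}$ satisfies the same coupled QRE equations, establishing convergence to a QRE.

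The main obstacle will be in the middle paragraph: converting $V_{\boldsymbol{\pi}_{k_j+1}}-V_{\boldsymbol{\pi}_{k_j}}\to 0$ into a per-agent fixed-point condition on $\boldsymbol{\pi}_*$. My plan is to prove a quantitative improvement lemma stating that the value gain from one agent's update is bounded below by a constant times the squared $\ell_2$ distance between the new and old per-agent policies, uniformly on the set of policies supported in $\mathrm{supp}(\boldsymbol{\mu})$ and bounded away from zero there. Such a bound follows from strong concavity of the regularized local objective on any such set, and the accumulation point $\boldsymbol{\pi}_*$ does lie in such a set because the Boltzmann closed form produces strictly positive masses on $\mathrm{supp}(\mu^i)$. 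Combining this quantitative improvement with the continuity of policy evaluation from Lemma~\ref{lemma: policy evaluation} and the almost-sure infinite recurrence of $\sigma$ is the delicate part of the argument.
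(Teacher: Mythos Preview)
Your proposal follows the same overall strategy as the paper: use the monotone improvement from Proposition~\ref{prop: policy improvement} together with boundedness to obtain convergence of the value/Q-sequence, then argue that the limit is a fixed point of the sequential update and hence a QRE via the Boltzmann characterization of Proposition~\ref{prop: representation of optimal policy}. The paper's own proof is considerably terser---it simply asserts that the policy sequence converges once $\boldsymbol{Q}_{\boldsymbol{\pi}_k}$ does and that the limit automatically satisfies the per-agent best-response inequality $V_{\boldsymbol{\bar\pi}}(s)\ge V_{\pi^i,\boldsymbol{\bar\pi}^{-i}}(s)$---so your treatment of random permutations, subsequential compactness, continuity of the update map, and the quantitative per-agent improvement lemma is more careful than what the paper spells out but entirely in the same spirit.
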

Proof can be found in Appendix A.

\begin{table*}[ht!]
    \small
    \centering
    \begin{tabular}{c|c c c c c c}
         Dataset
         &BC
         &OMAR
         &AlberDICE
         &CFCQL
         &OMIGA
         &InSPO
         \\
         \hline
         (a)
         &$0.00\pm 0.01$
         &$\mathbf{1.00\pm 0.00}$
         &$\mathbf{1.00\pm 0.00}$
         &$-0.64\pm 0.71$
         &$0.00\pm 0.01$
         &$\mathbf{1.00\pm 0.00}$
         \\
         (b)
         &$0.23\pm 0.01$
         &$-2.00\pm 0.00$
         &$\mathbf{1.00\pm 0.00}$
         &$-0.38\pm 0.11$
         &$0.21\pm 0.03$
         &$\mathbf{1.00\pm 0.00}$
         \\
         (c)
         &$0.00\pm 0.01$
         &$0.00\pm 0.00$
         &$\mathbf{1.00\pm 0.00}$
         &$-0.73\pm 0.48$
         &$0.05\pm 0.00$
         &$\mathbf{1.00\pm 0.00}$
    \end{tabular}
    \caption{Averaged test return on XOR game. }
    \label{tab: xor game}
\end{table*}

\begin{table*}[ht!]
    \small
    \setlength{\tabcolsep}{1mm}
    \centering
    \begin{tabular}{c|c c c c c c}
         Dataset
         &BC
         &OMAR
         &AlberDICE
         &CFCQL
         &OMIGA
         &InSPO
         \\
         \hline
         balanced
         &$-9.79\pm 0.41$
         &$\mathbf{20.00\pm 0.00}$
         &$\mathbf{20.00\pm 0.00}$
         &$\mathbf{20.00\pm 0.00}$
         &$\mathbf{20.00\pm 0.00}$
         &$\mathbf{20.00\pm 0.00}$
         \\
         imbalanced
         &$-3.47 \pm 0.17$
         &$5.00\pm 0.00$
         &$5.00\pm 0.00$
         &$5.00\pm 0.00$
         &$5.00\pm 0.00$
         &$\mathbf{20.00\pm 0.00}$
    \end{tabular}
    \caption{Averaged test return on M-NE game. }
    \label{tab: m-ne game}
\end{table*}

\subsection{The Practical Implementation of InSPO}

In this section, we design a practical implementation of InSPO to handle the issue of large state-action space, making it more suitable for offline MARL. 
More details can be found in Appendix B.

\paragraph{Policy Evaluation. }
According to Eq.(\ref{eq: policy evaluation operator}), we need to train a global Q-function to estimate the expected future return based on the current state and joint action. However, in MARL, the joint action space grows exponentially with the number of agents. To circumvent this exponential complexity, we instead maintain a local Q-function $Q_{\phi^{i_n}}$ for each agent $i_n\in\mathcal{N}$ to approximate $Q^{i_{1:n}}_{\boldsymbol{\pi}_\text{old}}(s, a^{i_n})$.  
Besides, $Q_{\phi^{i_n}}$ should be updated sequentially in conjunction with the policy in order to incorporate the information of updated teammates (i.e., ${\boldsymbol{\pi}_\text{new}^{i_{1:n-1}}}$) into the local Q-function. 
Thus, we optimize the following objective for each local Q-function $\phi^{i_n}$:
\begin{align}\label{eq: update of Q}
    \min_{\phi^{i_n}} \mathbb{E}_{(s,\boldsymbol{a}, s^\prime, r)\sim\mathcal{D}}
    \Big[\rho^{i_n}\cdot
    \Big(
    Q_{\phi^{i_n}}(s, a^{i_n}) - 
    y
    \Big)^2\Big], 
\end{align}
where
\begin{align*}
    \rho^{i_n} &= \rho^{i_n}(s,\boldsymbol{a}) \triangleq \frac{(\boldsymbol{\pi}_\text{new}^{i_{1:n-1}}\cdot \boldsymbol{\pi}_\text{old}^{-i_{1:n}})(\boldsymbol{a}^{-i_n}|s)}{\boldsymbol{\mu}^{-i_n}( \boldsymbol{a}^{-i_n}|s)} ,
    \\
    y =& y(s,\boldsymbol{a}, s^\prime, r) \triangleq r + \gamma \mathbb{E}_{a^{i_{n}\prime}\sim\pi_\text{old}^{i_n}} \big[Q_{{\phi}^{i_n}}(s^{\prime}, a^{i_n \prime}) \big]
    \\&-
    \alpha D_\text{KL}(\pi_\text{old}^{i_{n}}(\cdot | s^\prime), \mu^{i_n}(\cdot | s^\prime)) + \beta \mathcal{H}(\pi_\text{old}^{i_{n}}(\cdot | s^\prime)). 
\end{align*}
Here we omit the regularization terms for other agents to simplify the computation. 
Furthermore, to reduce the high variance of importance sampling ratio $\rho^{i_n}$, InSPO adopts importance resampling~\cite{resampling/SchlegelCGQW19} in practice, which resamples experience with probability proportional to $\rho^{i_n}$ to construct a resampled dataset $\mathcal{D}_{\rho^{i_n}}$, stabilizing the  algorithm training effectively. Thus, Eq.(\ref{eq: update of Q}) is replaced with 
\begin{align}\label{eq: resample update of Q}
    \min_{\phi^{i_n}} \mathbb{E}_{(s,\boldsymbol{a}, s^\prime, r)\sim\mathcal{D}_{\rho^{i_n}}}
    \Big[
    \Big(
    Q_{\phi^{i_n}}(s, a^{i_n}) - 
    y
    \Big)^2\Big].
\end{align}

\paragraph{Policy Improvement. }

After obtaining the optimal local value functions, we can adopt the in-sample sequential policy optimization method in Eq.(\ref{eq: in-sample 2}) to learn the local policy for each agent:
\begin{align*}\label{eq: final in-sample}
    &\theta^{i_n}_\text{new} 
    =\mathop{\arg\min}_{\theta^{i_n}}
    \mathbb{E}_{(s,a^{i_n})\sim\mathcal{D}_{\rho^{i_n}}}\Big[
    \nonumber\\&
    -\exp\Big(
    \frac{A_{\phi^{i_n}}(s, a^{i_n}) - \beta \log \mu^{i_n}(a^{i_n}|s)}{\alpha+\beta}
    \Big)
    \log \pi_{\theta^{i_n}}(a^{i_n}|s)
    \Big],
\end{align*}
where $A_{\phi^{i_n}}(s, a^{i_n})
    \triangleq
    Q_{\phi^{i_n}}(s, a^{i_n}) - 
    \mathbb{E}_{\pi_{\theta^{i_n}_\text{old}}} [ Q_{\phi^{i_n}}(s, a^{i_n}) ]$.

\begin{table*}[ht!]
    \small
    \setlength{\tabcolsep}{1mm}
    \centering
    \begin{tabular}{c c|c c c c c c}
         &Dataset
         &BC
         &OMAR
         &AlberDICE
         &CFCQL
         &OMIGA
         &InSPO
         \\
         \hline
         Optimal
         &$-1.26$
         &$-2.21\pm 0.90$
         &$-6.01\pm 0.00$
         &$-1.27\pm 0.03$*
         &$-12.75\pm 1.60$
         &$-9.87\pm 0.68$
         &$\mathbf{-1.26\pm 0.00}$
         \\
         Mixed
         &$-4.56$
         &$-5.88\pm 0.49$
         &$-6.01\pm 0.00$
         &$-1.29\pm 0.00$
         &$-13.79\pm 1.78$
         &$-13.45\pm 0.42$
         &$\mathbf{-1.27\pm 0.01}$
    \end{tabular}
    \caption{Averaged test return on Bridge. }
    \label{tab: Bridge}
\end{table*}
\begin{table*}[ht!]
    \small
    \centering
    \begin{tabular}{c c|c c c c c c}
         Map
         &Dataset
         &BC
         &OMAR
         &CFCQL
         &OMIGA
         &InSPO
         \\
         \hline
         \multirow{4}*{2s3z}
         &medium
         &$0.16\pm 0.07$
         &$0.15\pm 0.04$
         &$\mathbf{0.40\pm0.10}$
         &$0.23\pm 0.01$
         &$0.23\pm 0.06$
         \\
         ~
         &medium-replay
         &$0.33\pm 0.04$
         &$0.24\pm 0.09$
         &$0.55\pm 0.07$*
         &$0.42\pm 0.02$
         &$\mathbf{0.58\pm0.09}$
         \\
         ~
         &expert
         &$0.97\pm0.02$
         &$0.95\pm0.04$
         &$\mathbf{0.99\pm0.01}$
         &$0.98\pm 0.02$
         &$\mathbf{0.99\pm0.01}$
         \\
         ~
         &mixed
         &$0.44\pm0.06$
         &$0.60\pm0.04$
         &$0.84\pm0.09$*
         &$0.62\pm 0.03$
         &$\mathbf{0.85\pm0.04}$
         \\
         \hline
         \multirow{4}*{3s\_vs\_5z}
         &medium
         &$0.08\pm0.02$
         &$0.00\pm0.00$
         &$\mathbf{0.28\pm0.03}$
         &$0.02\pm 0.02$
         &$0.17\pm 0.05$
         \\
         ~
         &medium-replay
         &$0.01\pm0.01$
         &$0.00\pm0.00$
         &$\mathbf{0.12\pm0.04}$
         &$0.02\pm 0.01$
         &$0.10\pm 0.05$*
         \\
         ~
         &expert
         &$0.98\pm0.02$
         &$0.64\pm0.08 $
         &$\mathbf{0.99\pm0.01}$
         &$0.98\pm 0.02$
         &$\mathbf{0.99\pm0.01}$
         \\
         ~
         &mixed
         &$0.21\pm0.04$
         &$0.00\pm0.00$
         &$0.60\pm0.14$
         &$0.20\pm 0.06$
         &$\mathbf{0.78\pm0.09}$
         \\
         \hline
         \multirow{4}*{5m\_vs\_6m}
         &medium
         &$0.28\pm0.37$*
         &$0.19\pm 0.06$
         &$\mathbf{0.29\pm0.05}$
         &$0.25\pm 0.08$
         &$0.28\pm0.06$*
         \\
         ~
         &medium-replay
         &$0.18\pm0.06$
         &$0.03\pm 0.02 $
         &$0.22\pm0.06$*
         &$0.16\pm 0.05$
         &$\mathbf{0.24\pm0.07}$
         \\
         ~
         &expert
         &$0.82\pm 0.04$
         &$0.33\pm 0.06$
         &$\mathbf{0.84\pm 0.03}$
         &$0.74\pm 0.05$
         &$0.79\pm 0.12$
         \\
         ~
         &mixed
         &$0.21\pm 0.12$
         &$0.10\pm 0.10$
         &$0.76\pm 0.07$*
         &$0.38\pm 0.23$
         &$\mathbf{0.78\pm 0.06}$
         \\
         \hline
         \multirow{4}*{6h\_vs\_8z}
         &medium
         &$0.40\pm 0.03$
         &$0.04\pm 0.03$
         &$0.41\pm 0.04$*
         &$0.34\pm 0.01$
         &$\mathbf{0.43\pm 0.06}$
         \\
         ~
         &medium-replay
         &$0.11\pm 0.04$
         &$0.00\pm 0.00$
         &$0.21\pm 0.05$
         &$0.11\pm 0.04$
         &$\mathbf{0.23\pm 0.02}$
         \\
         ~
         &expert
         &$0.60\pm 0.04$
         &$0.01\pm 0.01$
         &$0.70\pm 0.06$*
         &$0.54\pm 0.04$
         &$\mathbf{0.74\pm 0.11}$
         \\
         ~
         &mixed
         &$0.27\pm0.06$
         &$0.00\pm 0.00$
         &$0.49\pm 0.08$
         &$0.36\pm 0.06$
         &$\mathbf{0.60\pm 0.12}$
         \\
         \hline
         ~
         &average performance
         &$0.38$
         &$0.21$
         &$0.54$
         &$0.39$
         &$\mathbf{0.55}$
    \end{tabular}
    \caption{Averaged test winning rate on StarCraft II Micromanagement. }
    \label{tab: sc2}
\end{table*}

\section{Experiments}

We conduct a series of experiments to evaluate InSPO on XOR game, M-NE game, Bridge~\cite{whyNoIGM/FuYXYW22} and StarCraft II Micromanagement~\cite{SMAC/Xu0LYWCZ23}. 
In addition to Behavior Cloning (BC), our baselines also include the current state-of-the-art offline MARL algorithms: OMAR~\cite{OMAR/PanH0X22}, CFCQL~\cite{cfcql/ShaoQCZJ23}, OMIGA~\cite{omiga/WangXZZ23} and AlberDICE~\cite{alberdice/MatsunagaLYLAK23}. 
Each algorithm is run for five random seeds, and we report the mean performance with standard deviation. 
For the final results, we indicate the algorithm with the best mean performance in bold, and an asterisk (*) denotes that the metric is not significantly different from the top-performing metric in that case, based on a heteroscedastic two-sided t-test with a 5\% significance level.
See Appendix C for experimental details.


\subsection{Comparative Evaluation}

\paragraph{Matrix Game. }

\begin{figure}[hb!]
    \centering
    \includegraphics[width=0.9\linewidth]{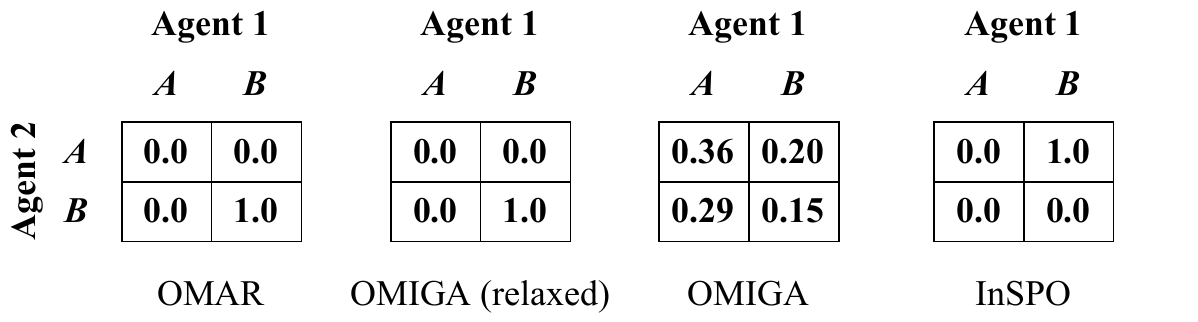}
    \caption{Final joint policy on XOR game for dataset (b). }
    \label{fig: converged policy on XOR}
\end{figure}
\begin{figure}[hb!]
    \centering
    \includegraphics[width=0.3\linewidth]{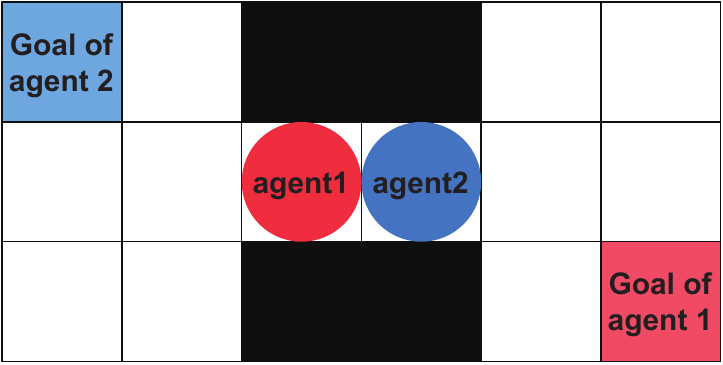}
    \caption{Bridge at the beginning. }
    \label{fig: Bridge}
\end{figure}

We evaluate whether InSPO can address the two issues highlighted in previous section using the XOR game and M-NE game shown in Figure~\ref{fig: xor game}(a) and Figure~\ref{fig: M-NE game}(a). 
First, we evaluate the ability of InSPO to handle the OOD joint actions issue using the XOR game. Table~\ref{tab: xor game} compares the performance of all algorithms on four datasets, each comprising an equal mix of different joint actions: (a) $\{(A,B), (B, A)\}$, (b) $\{(A, A), (A,B), (B, A)\}$, and (c) $\{(A, A), (A,B), (B, A), (B, B)\}$. Figure~\ref{fig: converged policy on XOR} illustrates the converged joint policy of OMAR, OMIGA, and InSPO on dataset (c), representing decentralized training, value decomposition, and sequential-update methods, respectively.

As observed, only InSPO and AlberDICE, two sequential-update MAPG methods, successfully converge to the optimal policy, while the other algorithms fail, even opting for OOD joint actions. 
Specifically, when a more relaxed behavior policy constraint is applied, the value decomposition method (i.e., OMIGA (relaxed) in Figure~\ref{fig: converged policy on XOR}) converges to an OOD joint action $(B, B)$, consistent with our analysis in previous section. These results suggest that decentralized training and value decomposition methods have limitations in environments that demand high levels of coordination. 

Next, we conduct InSPO on M-NE game to evaluate its ability to alleviate the local optimum convergence issue. Table~\ref{tab: m-ne game} shows the results on datasets: (a) a balanced dataset collected by a uniform policy $\mu^i(A)=\mu^i(B)=\mu^i(C)=1/3$, and (b) a imbalanced dataset collected by a near local optimum $\mu^i(A)=0.8, \mu^i(B)=\mu^i(C)=0.1$, for $i=1,2$. 
The results show that on the balanced dataset (a), most algorithms find the global optimal NE, while on the imbalanced dataset (b), only InSPO correctly identifies the global optimal NE. 
This indicates that in environments with multiple local optima, the convergence of most algorithms can be heavily influenced by the dataset distribution. 
Specifically, when the dataset is biased toward a local optimum, algorithms are prone to converging on this sub-optimal solution. In contrast, InSPO  converges to the global optimal solution through comprehensive exploration of the dataset. 
The results demonstrate in offline scenarios, algorithms must make full use of all available dataset information to prevent being heavily influenced by behavior policies.

\paragraph{Bridge. }

Bridge, illustrated in Figure~\ref{fig: Bridge}, is a grid-world Markov game resembling a temporal version of the XOR game. Two agents must alternately cross a one-person bridge as fast as possible. Starting side by side on the bridge, they must move together to allow one agent to cross first.
For this experiment, we use two datasets provided by \citet{alberdice/MatsunagaLYLAK23}: optimal and mixed. 
The optimal dataset contains 500 trajectories, generated by combining two optimal deterministic policies: either agent 1 steps back to let agent 2 cross first, or vice versa. The mixed dataset includes the optimal dataset plus an additional 500 trajectories generated using a uniform random policy.

The performance is shown in Table~\ref{tab: Bridge}, where the results of BC, OMAR and AlberDICE are from the report in \citet{alberdice/MatsunagaLYLAK23}. 
The performance is similar to that of the XOR game: only InSPO and AlberDICE, both using sequential-update, achieve near-optimal performance on both datasets. In contrast, both value decomposition methods fail to converge and produced undesirable outcomes.

\paragraph{StarCraft II. }

We further extend our study to the StarCraft II micromanagement benchmarks, a high-dimensional and complex environment widely used in both online and offline MARL. In this environment, we consider four representative maps: 2 easy maps (2s3z, 3s\_vs\_5z), 1 hard map (5m\_vs\_6m) and 1 super-hard map (6h\_vs\_8z). 
We use four datasets provided by \citet{cfcql/ShaoQCZJ23}: medium, expert, medium-replay and mixed. The medium-replay dataset is a replay buffer collected during training until the policy achieves medium performance, while the mixed dataset is the equal mixture of the medium and expert datasets. The results are shown in Table~\ref{tab: sc2}, with the performance of CFCQL, OMAR, and BC taken from \citeauthor{cfcql/ShaoQCZJ23}'s report.

In contrast to the previous benchmarks, StarCraft II does not exhibit a highly multi-modal reward landscape. Additionally, the agents share nearly identical local objectives, making this environment suitable for the IGM principle. Therefore, value decomposition methods have achieved state-of-the-art performance in this environment both in offline and online settings. Even so, as shown in Table~\ref{tab: sc2}, InSPO still demonstrates competitive performance and achieves state-of-the-art results in most tasks.

\begin{figure}[t!]
    \centering
    \includegraphics[width=0.65\linewidth]{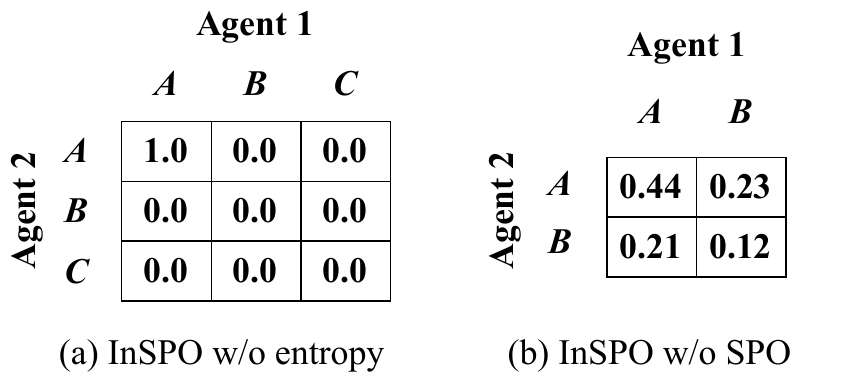}
    \caption{Ablation on entropy and sequential update scheme. (a) is InSPO without entropy on M-NE game for the imbalanced dataset. (b) is  simultaneous-update version of InSPO on XOR game for dataset (b). }
    \label{fig: ablation}
\end{figure}

\begin{table}[t!]
    \small
    \setlength{\tabcolsep}{0.45mm}
    \centering
    \begin{tabular}{c|c c c c|c}
         dataset
         &$\alpha=0.1$
         &$\alpha=5$
         &$\alpha=10$
         &$\alpha=50$
         &auto-$\alpha$
         \\
         \hline
         expert
         &$0.51\pm 0.17$
         &$0.54\pm 0.06$
         &$0.69\pm 0.05$
         &$0.62\pm 0.03$
         &$0.74$ 
         \\
         mixed
         &$0.17\pm 0.07$
         &$0.54\pm 0.06$
         &$0.53\pm 0.12$
         &$0.48\pm 0.07$
         &$0.60$ 
    \end{tabular}
    \caption{Ablation results for $\alpha$ on 6h\_vs\_8z. }
    \label{tab: ablation on alpha}
\end{table}

\paragraph{Ablation Study. }

Here we present the impact of different components on performance of InSPO. 
Figure~\ref{fig: ablation}(a) shows the converged policy of InSPO without entropy in the M-NE game on the imbalanced dataset. Without the perturbations of entropy in the optimization objective, InSPO w/o entropy cannot escape the local optimum. 
Figure~\ref{fig: ablation}(b) shows the policy of InSPO using the simultaneous update scheme instead of sequential policy optimization (denoted as InSPO w/o SPO) on dataset (b) of the XOR game. Due to conflicting update directions, InSPO w/o SPO fails to learn the optimal policy and faces the OOD joint actions issue.

Temperature $\alpha$ is used to control the degree of conservatism. A too large $\alpha$ will result in an overly conservative policy, while a too small one will easily causes distribution shift. 
Thus, to obtain a suitable $\alpha$, we implement both fixed and auto-tuned $\alpha$ in practice (see Appendix B for details), where the auto-tuned $\alpha$ is adjusted by
$\min_{\alpha} \mathbb{E}_{\mathcal{D}}[ \alpha D_\text{KL}( \boldsymbol{\pi}, \boldsymbol{\mu}) - \alpha \bar D_\text{KL}]$, where $\bar D_\text{KL}$ is the target value. 
Table~\ref{tab: ablation on alpha} gives ablation results for $\alpha$, which shows that the auto-tuned $\alpha$ can find an appropriate $\alpha$ to further improve performance. 

Furthermore, we explore the impact of update order on performance and the training efficiency of sequential updates. These results are provided in Appendix C.

\section{Conclusion}

In this paper, we study the offline MARL problem, a topic of significant practical importance and challenges that has not received adequate attention. 
We begin with two simple yet highly illustrative matrix games, highlighting some limitations of current offline MARL algorithms in addressing OOD joint actions and sub-optimal convergence issues. To overcome these challenges, we propose a novel algorithm called InSPO, which utilizes sequential-update in-sample learning to avoid OOD joint actions, and introduces policy entropy to ensure comprehensive exploration of the dataset, thus avoiding the influence of local optimum behavior policies. Furthermore, we theoretically demonstrate that InSPO possesses monotonic improvement and QRE convergence properties, and then empirically validate its superior performance on various MARL benchmarks. 
For future research, integrating sequential-update in-sample learning and enhanced dataset utilization with other offline MARL algorithms presents an intriguing direction.


\section{Acknowledgments}

We gratefully acknowledge the support from the National Natural Science Foundation of China (No. 62076259, 62402252), the Fundamental and Applicational Research Funds of Guangdong Province (No. 2023A1515012946), the Fundamental Research Funds for the Central Universities Sun Yat-sen University, and the Pengcheng Laboratory Project (PCL2023A08, PCL2024Y02). This research is also supported by Meituan.


\appendix

\setcounter{secnumdepth}{2}

\section{Proofs}

\subsection{Proof of Policy Evaluation}

\begin{lemma}
    Given a policy $\boldsymbol{\pi}$, consider the modified policy evaluation operator $\mathcal{T}_{\boldsymbol{\pi}}$ under MEBR-MGs and a initial Q-function $\boldsymbol{Q}_0: \mathcal{S}\times\mathcal{A}\rightarrow\mathbb{R}$, and define $\boldsymbol{Q}_{k+1}=\mathcal{T}_{\boldsymbol{\pi}} \boldsymbol{Q}_{k}$. 
    Then the sequence $\boldsymbol{Q}_{k}$ will converge to the Q-function $\boldsymbol{Q}_{\boldsymbol{\pi}}$ of policy $\boldsymbol{\pi}$ as $k\rightarrow\infty$. 
\end{lemma}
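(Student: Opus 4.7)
The plan is to treat the lemma as a standard application of the Banach fixed-point theorem applied to $\mathcal{T}_{\boldsymbol{\pi}}$ on the complete metric space $(\mathbb{R}^{\mathcal{S}\times\mathcal{A}},\|\cdot\|_\infty)$. Concretely, I will show that (i) $\mathcal{T}_{\boldsymbol{\pi}}$ is a $\gamma$-contraction in the sup-norm, so that the iterates $\boldsymbol{Q}_{k+1}=\mathcal{T}_{\boldsymbol{\pi}}\boldsymbol{Q}_k$ converge to a unique fixed point, and then (ii) identify that fixed point with $\boldsymbol{Q}_{\boldsymbol{\pi}}$, the Q-function induced by $\boldsymbol{\pi}$ under the MEBR-MG objective.

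For step (i), the key observation is that the regularization block
\[
g_{\boldsymbol{\pi}}(s')\triangleq \mathbb{E}_{\boldsymbol{a}'\sim\boldsymbol{\pi}(\cdot|s')}\Big[\sum_{i\in\mathcal{N}}\Big(\alpha \log\tfrac{\pi^i(a'^i|s')}{\mu^i(a'^i|s')}+\beta\log\pi^i(a'^i|s')\Big)\Big]
\]
depends only on $\boldsymbol{\pi}$ and $\boldsymbol{\mu}$, not on the Q-iterate being transformed. Hence for any two bounded $\boldsymbol{Q}_1,\boldsymbol{Q}_2$,
\[
\bigl|\mathcal{T}_{\boldsymbol{\pi}}\boldsymbol{Q}_1(s,\boldsymbol{a})-\mathcal{T}_{\boldsymbol{\pi}}\boldsymbol{Q}_2(s,\boldsymbol{a})\bigr|
=\gamma\Big|\mathbb{E}_{s'|s,\boldsymbol{a}}\mathbb{E}_{\boldsymbol{a}'\sim\boldsymbol{\pi}}\bigl[\boldsymbol{Q}_1(s',\boldsymbol{a}')-\boldsymbol{Q}_2(s',\boldsymbol{a}')\bigr]\Big|
\le \gamma\,\|\boldsymbol{Q}_1-\boldsymbol{Q}_2\|_\infty,
\]
uniformly in $(s,\boldsymbol{a})$. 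Taking the sup on the left yields the contraction. The only subtle point is a boundedness assumption: since $\mathcal{A}$ is finite and $\boldsymbol{\pi},\boldsymbol{\mu}$ have finite action support (with $\boldsymbol{\mu}$ bounded away from $0$ on the support of $\boldsymbol{\pi}$, which follows from the reverse-KL regularization in MEBR-MG), $g_{\boldsymbol{\pi}}$ is uniformly bounded, so the operator maps bounded functions to bounded functions and the completeness of the ambient space is preserved.

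For step (ii), I will show that $\boldsymbol{Q}_{\boldsymbol{\pi}}$ is a fixed point of $\mathcal{T}_{\boldsymbol{\pi}}$. Starting from the MEBR-MG objective~(\ref{eq: MEBR-MG}) with reward shaping $\tilde r(s,\boldsymbol{a},s')\triangleq r(s,\boldsymbol{a})-\gamma\,g_{\boldsymbol{\pi}}(s')$, one obtains the standard soft-Bellman relation $\boldsymbol{Q}_{\boldsymbol{\pi}}(s,\boldsymbol{a})=r(s,\boldsymbol{a})+\gamma\mathbb{E}_{s'|s,\boldsymbol{a}}\bigl[\mathbb{E}_{\boldsymbol{a}'\sim\boldsymbol{\pi}}[\boldsymbol{Q}_{\boldsymbol{\pi}}(s',\boldsymbol{a}')]-g_{\boldsymbol{\pi}}(s')\bigr]$, which is precisely $\boldsymbol{Q}_{\boldsymbol{\pi}}=\mathcal{T}_{\boldsymbol{\pi}}\boldsymbol{Q}_{\boldsymbol{\pi}}$. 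Combined with uniqueness from (i), Banach's theorem gives $\boldsymbol{Q}_k\to\boldsymbol{Q}_{\boldsymbol{\pi}}$.

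I expect the contraction argument itself to be essentially routine; the main obstacle is purely a bookkeeping one, namely, verifying well-definedness and uniform boundedness of $g_{\boldsymbol{\pi}}$ (so that $\mathcal{T}_{\boldsymbol{\pi}}$ is a self-map on the complete space of bounded functions), and then checking that the Q-function arising from the MEBR-MG objective, with its two regularizers, indeed satisfies the recursion defining $\mathcal{T}_{\boldsymbol{\pi}}$. Both amount to standard manipulations once the shaped-reward viewpoint is adopted.
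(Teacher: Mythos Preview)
Your proposal is correct and follows essentially the same approach as the paper: both absorb the regularization terms into a state-dependent pseudo-reward (your $g_{\boldsymbol{\pi}}(s')$ is exactly the paper's $\alpha D_\text{KL}(\boldsymbol{\pi}(\cdot|s'),\boldsymbol{\mu}(\cdot|s'))-\beta\mathcal{H}(\boldsymbol{\pi}(\cdot|s'))$), reducing $\mathcal{T}_{\boldsymbol{\pi}}$ to a standard Bellman policy-evaluation operator. The paper then simply cites ``standard convergence results for policy evaluation,'' whereas you spell out the $\gamma$-contraction and Banach fixed-point argument explicitly and add the boundedness bookkeeping; this is more thorough but not a different route.
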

\begin{proof}
    With a pseudo-reward $r_{\boldsymbol{\pi}}(s, \boldsymbol{a})\triangleq r(s, \boldsymbol{a}) - \gamma\mathbb{E}_{s^\prime|s,\boldsymbol{a}}[\alpha D_\text{KL}(\boldsymbol{\pi}(\cdot|s^\prime), \boldsymbol{\mu}(\cdot|s^\prime)) - \beta \mathcal{H}(\boldsymbol{\pi}(\cdot|s^\prime))$], the update rule of Q-function can be represented as: 
    \begin{align*}
        \boldsymbol{Q}(s, \boldsymbol{a})
        \gets
        r_{\boldsymbol{\pi}}(s, \boldsymbol{a})
        +\gamma\mathbb{E}_{s^\prime|s,\boldsymbol{a}, \boldsymbol{a}^\prime\sim\boldsymbol{\pi}}\Big[ \boldsymbol{Q}(s^\prime, \boldsymbol{a}^\prime) \Big].
    \end{align*}
    Then, we can apply the standard convergence results for policy evaluation~\cite{sutton2018reinforcement}. 
\end{proof}

\subsection{Proof of QRE}

\begin{proposition}\label{re prop: representation of optimal policy}
    In a MEBR-MG, a joint policy $\boldsymbol{\pi}_*$ is a QRE if it holds
    \begin{align}
        V_{\boldsymbol{\pi}_{*}}(s) \ge V_{\pi^i, \boldsymbol{\pi}_{*}^{-i}}(s), 
        \quad \forall{i}\in\mathcal{N}, \pi^i, s \in \mathcal{S}. 
    \end{align}
    Then the QRE policies for each agent $i$ are given by
    \begin{align}\label{re eq: QRE}
        \pi^{i}_*(a^i|s) &\propto 
        \mu^i(a^i|s) 
        \nonumber\\
        \cdot\exp &\Big( \frac{\mathbb{E}_{\boldsymbol{a}^{-i}\sim\boldsymbol{\pi}^{-i}_{*}}[\boldsymbol{Q}_{\boldsymbol{\pi}_*}(s, a^i, \boldsymbol{a}^{-i})] - \beta \log \mu^i(a^i|s)}{\alpha + \beta} \Big)
    \end{align}
\end{proposition}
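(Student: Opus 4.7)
The plan is to prove the proposition in two stages: first, to show that the stated state-wise value dominance is equivalent to the QRE condition in Definition~\ref{def: QRE}; second, to derive the closed-form Boltzmann expression by a per-state best-response computation with the simplex constraint.

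For the equivalence, the forward direction is immediate because $\tilde J(\boldsymbol{\pi}) = \mathbb{E}_{s_0 \sim d}[V_{\boldsymbol{\pi}}(s_0)]$ in MEBR-MG (the $V_{\boldsymbol{\pi}}$ in Eq.~(\ref{eq: policy evaluation operator}) already absorbs both the $\alpha$-KL and the $\beta$-entropy regularization entering $\tilde J$), so state-wise dominance integrates against $d$ to give $\tilde J$-dominance. For the converse, I would invoke the performance-difference identity
\begin{align*}
\tilde J(\pi^i, \boldsymbol{\pi}_*^{-i}) - \tilde J(\boldsymbol{\pi}_*) = \mathbb{E}_{s_0 \sim d}\bigl[V_{\pi^i, \boldsymbol{\pi}_*^{-i}}(s_0) - V_{\boldsymbol{\pi}_*}(s_0)\bigr],
\end{align*}
and apply a one-step deviation argument: given any state $s^\dagger$ at which $V_{\pi^i, \boldsymbol{\pi}_*^{-i}}(s^\dagger) > V_{\boldsymbol{\pi}_*}(s^\dagger)$, I would build a policy that equals $\pi^i$ on $s^\dagger$ and $\pi^i_*$ elsewhere, and propagate the local gain through the Bellman recursion of Lemma~\ref{lemma: policy evaluation} to contradict the QRE property.

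For the closed form, fix a QRE $\boldsymbol{\pi}_*$ and agent $i$. By the equivalence above, $\pi^i_*(\cdot|s)$ maximizes $V_{\pi^i, \boldsymbol{\pi}_*^{-i}}(s)$ over distributions on $\mathcal{A}^i$ for every $s$. Using Eq.~(\ref{eq: policy evaluation operator}) and isolating the terms in $\pi^i(\cdot|s)$, the per-state subproblem reduces to
\begin{align*}
\max_{\pi^i(\cdot|s)} \sum_{a^i} \pi^i(a^i|s) \Bigl( \bar Q^i(s, a^i) - \alpha \log \tfrac{\pi^i(a^i|s)}{\mu^i(a^i|s)} - \beta \log \pi^i(a^i|s) \Bigr),
\end{align*}
subject to $\sum_{a^i} \pi^i(a^i|s) = 1$, where $\bar Q^i(s, a^i) \triangleq \mathbb{E}_{\boldsymbol{a}^{-i} \sim \boldsymbol{\pi}_*^{-i}}[\boldsymbol{Q}_{\boldsymbol{\pi}_*}(s, a^i, \boldsymbol{a}^{-i})]$. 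Introducing a Lagrange multiplier $\lambda(s)$ for the simplex constraint and zeroing $\partial/\partial \pi^i(a^i|s)$ yields
\begin{align*}
\bar Q^i(s, a^i) - \alpha\bigl(\log \tfrac{\pi^i_*(a^i|s)}{\mu^i(a^i|s)} + 1\bigr) - \beta\bigl(\log \pi^i_*(a^i|s) + 1\bigr) = \lambda(s),
\end{align*}
from which $\pi^i_*(a^i|s) \propto \mu^i(a^i|s)^{\alpha/(\alpha+\beta)} \exp\!\bigl(\bar Q^i(s, a^i)/(\alpha+\beta)\bigr)$. The identity $\mu^i(a^i|s)^{\alpha/(\alpha+\beta)} = \mu^i(a^i|s)\exp(-\beta \log \mu^i(a^i|s)/(\alpha+\beta))$ puts this into the form of Eq.~(\ref{eq: QRE}) after absorbing the normalization into the proportionality. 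Strict concavity of the per-state objective on the simplex (both the KL and the entropy terms are strictly concave) ensures this KKT stationary point is the unique maximizer.

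The main obstacle is the reverse direction of the equivalence: lifting a per-state dominance failure to a strict global $\tilde J$-gain requires some care with localization, since a naive simultaneous deviation on many states could cancel out through the discounted downstream coupling. The cleanest route is the one-step interchange argument sketched above, possibly invoking a mild reachability assumption on $d$ (or on the induced occupancy measure) so that $s^\dagger$ genuinely contributes to $\tilde J$. Once this is in place, the closed-form half is a routine Lagrangian calculation that rests only on the concavity of the regularized per-state objective.
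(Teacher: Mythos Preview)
Your Lagrangian/KKT derivation of the closed form matches the paper's proof essentially line for line: the paper writes the per-state constrained maximization (with $\boldsymbol{Q}_{\boldsymbol{\pi}}$ treated as fixed and the simplex constraint $\sum_{a^i}\pi^i(a^i|s)=1$), forms the Lagrangian, differentiates, and reads off the Boltzmann expression together with the normalizing constant. Your first stage---establishing the equivalence between the state-wise value-dominance condition and Definition~\ref{def: QRE}---goes beyond what the paper actually proves; the paper's proof does not argue either direction of that equivalence and simply proceeds from per-state optimality to the KKT calculation. So the obstacle you flag (lifting a local deviation to a strict $\tilde J$-gain, possibly under a reachability assumption) is legitimate extra care, but it is more than the paper itself supplies.
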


\begin{proof}

We consider the following constrained policy optimization problem to agent $i$:
\begin{align*}
    {\max}_{\pi^{i}} &\mathbb{E}_{a^{i}\sim\pi^{i}, \boldsymbol{a}^{-i}\sim\boldsymbol{\pi}^{-i}}\Big[ 
    \boldsymbol{Q}_{\boldsymbol{\pi}}(s, \boldsymbol{a}) \Big]
    \\&- 
    \alpha \sum_{j=1}^{N}\sum_{a^j}\pi^j(a^j|s)\log\frac{\pi^{j}(a^{j}|s)}{\mu^{j}(a^{j}|s)}
    \\&- 
    \beta \sum_{j=1}^{N}\sum_{a^j}\pi^j(a^j|s)\log{\pi^{j}(a^{j}|s)},
    \\\text{s.t.}&
    \sum_{a^i}\pi^i(a^i|s) = 1, \quad \forall{s}\in\mathcal{S}. 
\end{align*}
Its associated Lagrangian function is
\begin{align*}
    \mathcal{L}(\pi^i, \lambda) = 
    &\mathbb{E}_{a^{i}\sim\pi^{i}, \boldsymbol{a}^{-i}\sim\boldsymbol{\pi}^{-i}}\Big[ 
    \boldsymbol{Q}_{\boldsymbol{\pi}}(s, \boldsymbol{a}) \Big]
    \\&- 
    \alpha \sum_{j=1}^{N}\sum_{a^j}\pi^j(a^j|s)\log\frac{\pi^{j}(a^{j}|s)}{\mu^{j}(a^{j}|s)}
    \\&- 
    \beta \sum_{j=1}^{N}\sum_{a^j}\pi^j(a^j|s)\log{\pi^{j}(a^{j}|s)}
    \\&+
    \lambda \Big( \sum_{a^i}\pi^i(a^i|s) - 1 \Big). 
\end{align*}

Therefore, we have:
\begin{align*}
    \frac{\partial \mathcal{L}(\pi^i, \lambda)}{\partial \pi^i(a^i|s)}
    &=
    \mathbb{E}_{\boldsymbol{a}^{-i}\sim\boldsymbol{\pi}^{-i}} \big[\boldsymbol{Q}_{\boldsymbol{\pi}}(s, a^i, \boldsymbol{a}^{-i}) \big]
    \\- \alpha& \log \frac{\pi^i(a^i|s)}{\mu^i(a^i|s)} - \beta \log \pi^i(a^i|s) - \alpha - \beta + \lambda.
\end{align*}
According to the Karush-Kuhn-Tucker (KKT) conditions, we know the optimal policy (i.e., QRE policy) $\pi^i_*$ satisfies
\begin{align*}
    &\pi_*^i(a^i|s) = 
    \exp(\frac{\lambda_*}{\alpha+\beta} - 1)\cdot \mu^{i}(a^i|s)\cdot\exp\Big( 
    \\&
    \frac{\mathbb{E}_{\boldsymbol{a}^{-i}\sim\boldsymbol{\pi}^{-i}_{*}}[\boldsymbol{Q}_{\boldsymbol{\pi}_*}(s, a^i, \boldsymbol{a}^{-i})] - \beta \log \mu^i(a^i|s)}{\alpha + \beta} \Big),
\end{align*}
where $\lambda_*$ is used to ensure $\sum_{a^i}\pi^i_*(a^i|s) = 1$, i.e.,
\begin{align*}
    &\exp(1 - \frac{\lambda_*}{\alpha+\beta})
    =\sum_{a^i}\Big[
    \mu^{i}(a^i|s)
    \\&\cdot\exp\Big( 
    \frac{\mathbb{E}_{\boldsymbol{a}^{-i}\sim\boldsymbol{\pi}^{-i}_{*}}[\boldsymbol{Q}_{\boldsymbol{\pi}_*}(s, a^i, \boldsymbol{a}^{-i})] - \beta \log \mu^i(a^i|s)}{\alpha + \beta} \Big)
    \Big].
\end{align*}
Thus, the proof is completed.

\end{proof}

\subsection{Proof of Policy Improvement}

\begin{proposition}\label{re prop: policy improvement}
    The sequential policy optimization procedure under MEBR-MG guarantees policy improvement, i.e., $\forall{s}\in\mathcal{S}, a\in\mathcal{A}$, 
    \begin{align*}
        &\boldsymbol{Q}_{\boldsymbol{\pi}_\text{new}}(s, \boldsymbol{a})
        \ge
        \boldsymbol{Q}_{\boldsymbol{\pi}_\text{old}}(s, \boldsymbol{a}) 
        ,
        V_{\boldsymbol{\pi}_\text{new}}(s) \ge V_{\boldsymbol{\pi}_\text{old}}(s). 
    \end{align*}
\end{proposition}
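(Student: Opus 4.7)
The plan is to adapt the standard soft policy iteration argument to the sequential multi-agent setting by (i) telescoping across the sequential per-agent updates, and (ii) iterating the modified Bellman operator $\mathcal{T}_{\boldsymbol{\pi}_\text{new}}$ from Eq.~(\ref{eq: policy evaluation operator}) using Lemma~\ref{lemma: policy evaluation}.

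First, I would introduce the intermediate joint policies $\boldsymbol{\pi}^{(n)} = (\pi_\text{new}^{i_{1:n}}, \pi_\text{old}^{i_{n+1:N}})$, so that $\boldsymbol{\pi}^{(0)}=\boldsymbol{\pi}_\text{old}$ and $\boldsymbol{\pi}^{(N)}=\boldsymbol{\pi}_\text{new}$, and define the auxiliary ``one-step value under the old $\boldsymbol{Q}$'':
\begin{align*}
\hat V_{\boldsymbol{\pi}}(s) \triangleq \mathbb{E}_{\boldsymbol{a}\sim\boldsymbol{\pi}}\Big[\boldsymbol{Q}_{\boldsymbol{\pi}_\text{old}}(s,\boldsymbol{a})
- \sum_{i\in\mathcal{N}}\big(\alpha\log\tfrac{\pi^i(a^i|s)}{\mu^i(a^i|s)}+\beta\log\pi^i(a^i|s)\big)\Big].
\end{align*}
Since $\boldsymbol{\pi}^{(n)}$ and $\boldsymbol{\pi}^{(n-1)}$ differ only in the $i_n$-th marginal, the regularization contributions of all other agents cancel exactly, and the $\boldsymbol{Q}$-expectation collapses, via the definition of $Q^{i_{1:n}}_{\boldsymbol{\pi}_\text{old}}$, into the very single-agent objective being maximized in Eq.~(\ref{eq: sequential-update objective 1}) (augmented with the entropy term under MEBR-MG). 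Optimality of $\pi_\text{new}^{i_n}$ for that scalar subproblem, established via the KKT argument already used for Proposition~\ref{prop: representation of optimal policy}, then yields $\hat V_{\boldsymbol{\pi}^{(n)}}(s)\ge \hat V_{\boldsymbol{\pi}^{(n-1)}}(s)$ pointwise in $s$. Telescoping $n=1,\dots,N$ gives $\hat V_{\boldsymbol{\pi}_\text{new}}(s)\ge \hat V_{\boldsymbol{\pi}_\text{old}}(s)=V_{\boldsymbol{\pi}_\text{old}}(s)$.

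Second, I would lift this one-step improvement to the $\boldsymbol{Q}$-function by applying $\mathcal{T}_{\boldsymbol{\pi}_\text{new}}$ to $\boldsymbol{Q}_{\boldsymbol{\pi}_\text{old}}$. Unpacking the operator gives
\begin{align*}
\mathcal{T}_{\boldsymbol{\pi}_\text{new}}\boldsymbol{Q}_{\boldsymbol{\pi}_\text{old}}(s,\boldsymbol{a})
= r(s,\boldsymbol{a}) + \gamma\mathbb{E}_{s'|s,\boldsymbol{a}}\big[\hat V_{\boldsymbol{\pi}_\text{new}}(s')\big]
\ge r(s,\boldsymbol{a}) + \gamma\mathbb{E}_{s'|s,\boldsymbol{a}}\big[V_{\boldsymbol{\pi}_\text{old}}(s')\big]
= \boldsymbol{Q}_{\boldsymbol{\pi}_\text{old}}(s,\boldsymbol{a}).
\end{align*}
Because $\mathcal{T}_{\boldsymbol{\pi}_\text{new}}$ is monotone (the regularization terms are independent of $\boldsymbol{Q}$), iterating preserves the inequality: $\mathcal{T}_{\boldsymbol{\pi}_\text{new}}^{k}\boldsymbol{Q}_{\boldsymbol{\pi}_\text{old}} \ge \boldsymbol{Q}_{\boldsymbol{\pi}_\text{old}}$ for every $k$. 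By Lemma~\ref{lemma: policy evaluation}, the left-hand side converges to $\boldsymbol{Q}_{\boldsymbol{\pi}_\text{new}}$, proving $\boldsymbol{Q}_{\boldsymbol{\pi}_\text{new}}(s,\boldsymbol{a})\ge \boldsymbol{Q}_{\boldsymbol{\pi}_\text{old}}(s,\boldsymbol{a})$. The $V$-inequality then follows immediately: $V_{\boldsymbol{\pi}_\text{new}}(s) = \mathbb{E}_{\boldsymbol{a}\sim\boldsymbol{\pi}_\text{new}}[\boldsymbol{Q}_{\boldsymbol{\pi}_\text{new}}(s,\boldsymbol{a})-\text{reg}(\boldsymbol{\pi}_\text{new})] \ge \hat V_{\boldsymbol{\pi}_\text{new}}(s) \ge V_{\boldsymbol{\pi}_\text{old}}(s)$.

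The main obstacle will be the bookkeeping in the telescoping step: I must verify cleanly that when moving from $\boldsymbol{\pi}^{(n-1)}$ to $\boldsymbol{\pi}^{(n)}$, the joint expectation $\mathbb{E}_{\boldsymbol{a}\sim\boldsymbol{\pi}^{(n-1)}}[\boldsymbol{Q}_{\boldsymbol{\pi}_\text{old}}(s,\boldsymbol{a})]$ factors through the definition of $Q^{i_{1:n}}_{\boldsymbol{\pi}_\text{old}}(s,a^{i_n})$ under the product-policy structure $\boldsymbol{\pi}(\cdot|s)=\prod_i\pi^i(\cdot|s)$, and that the reverse-KL plus entropy regularization decomposes additively across agents so that only the $i_n$-th term survives the difference. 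Once this decomposition is written out explicitly, the rest of the argument is a direct transcription of single-agent soft policy improvement with the one-step optimality of $\pi_\text{new}^{i_n}$ doing the work at each sequential step.
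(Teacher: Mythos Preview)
Your proposal is correct, and the second half (applying $\mathcal{T}_{\boldsymbol{\pi}_\text{new}}$ to $\boldsymbol{Q}_{\boldsymbol{\pi}_\text{old}}$, invoking monotonicity, iterating, and appealing to Lemma~\ref{lemma: policy evaluation}) matches the paper's argument exactly. The difference lies in how the one-step improvement $\hat V_{\boldsymbol{\pi}_\text{new}}(s)\ge V_{\boldsymbol{\pi}_\text{old}}(s)$ is established. You telescope directly through the intermediate joint policies $\boldsymbol{\pi}^{(n)}$, using at each step only that $\pi_\text{new}^{i_n}$ beats $\pi_\text{old}^{i_n}$ in the per-agent objective; the additive decomposition of the reverse-KL and entropy terms plus the product structure of $\boldsymbol{\pi}$ make this comparison clean. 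The paper instead proves the stronger claim that $\boldsymbol{\pi}_\text{new}=\arg\max_{\boldsymbol{\pi}} L_{\boldsymbol{\pi}_\text{old}}(\boldsymbol{\pi})$ by contradiction, invoking the Multi-Agent Advantage Decomposition lemma of \citet{HAPPO/KubaCWWSW022} to rewrite the joint advantage as a sum of sequential multi-agent advantages and then applying the per-agent optimality inequality against an arbitrary competitor $\bar{\boldsymbol{\pi}}$. Your route is more elementary (no external decomposition lemma is needed) and yields exactly the inequality required for this proposition; the paper's route establishes a global-optimality statement that is stronger than what Proposition~\ref{prop: policy improvement} itself needs, though that stronger statement is later reused when arguing QRE convergence in Theorem~\ref{thrm: converge to QRE}.
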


\begin{proof}

The policy improvement step of sequential policy optimization is given by
\begin{align}\label{re eq: sequential-update objective 1}
    \pi_\text{new}^{i_n}&={\arg\max}_{\pi^{i_n}} \mathbb{E}_{a^{i_n}\sim\pi^{i_n}}\Big[ 
    Q^{i_{1:n}}_{\boldsymbol{\pi}_\text{old}}(s, a^{i_n}) 
    \nonumber\\&- 
    \alpha \log\frac{\pi^{i_n}(a^{i_n}|s)}{\mu^{i_n}(a^{i_n}|s)}  - \beta \log \pi^{i_n}(a^{i_n}|s) \Big],
\end{align}
 where
\begin{align*}
    Q^{i_{1:n}}_{\boldsymbol{\pi}_\text{old}}(s, a^{i_n})
    \triangleq
    \mathbb{E}_{\boldsymbol{\pi}_\text{new}^{i_{1:n-1}},\boldsymbol{\pi}_\text{old}^{-i_{1:n}}}\Big[ \boldsymbol{Q}_{\boldsymbol{\pi_\text{old}}}(s, \boldsymbol{a}^{-i_{n}}, a^{i_n}) \Big]. 
\end{align*}

We first show that the resulting joint policy $\boldsymbol{\pi}_\text{new}$ satisfies:
\begin{align*}
    \boldsymbol{\pi}_\text{new}=&{\arg\max}_{\boldsymbol{\pi}} \mathbb{E}_{\boldsymbol{a}\sim\boldsymbol{\pi}}\Big[ 
    Q_{\boldsymbol{\pi}_\text{old}}(s,\boldsymbol{a}) 
    \nonumber\\&- 
    \alpha \log\frac{\boldsymbol{\pi}(\boldsymbol{a}|s)}{\boldsymbol{\mu}(\boldsymbol{a}|s)}  - \beta \log \boldsymbol{\pi}(\boldsymbol{a}|s) \Big]
    \\
    =&{\arg\max}_{\boldsymbol{\pi}} L_{\boldsymbol{\pi}_\text{old}}(\boldsymbol{\pi}).
\end{align*}
Otherwise, suppose that there exists a policy $\boldsymbol{\bar\pi} \neq \boldsymbol{\pi}_\text{new}$ such that $L_{\boldsymbol{\pi}_\text{old}}(\boldsymbol{\bar\pi}) > L_{\boldsymbol{\pi}_\text{old}}(\boldsymbol{\pi}_\text{new})$.
From the policy improvement step, we have 
\begin{align*}
    \mathbb{E}_{a^{i_n}\sim\pi_\text{new}^{i_n}}&\Big[ 
    \mathbb{E}_{\boldsymbol{\pi}_\text{new}^{i_{1:n-1}},\boldsymbol{\pi}_\text{old}^{-i_{1:n}}}\Big[ \boldsymbol{Q}_{\boldsymbol{\pi_\text{old}}}(s, \boldsymbol{a}^{-i_{n}}, a^{i_n}) \Big] 
    \\-&
    \alpha \log\frac{\pi_\text{new}^{i_n}(a^{i_n}|s)}{\mu^{i_n}(a^{i_n}|s)}  - \beta \log \pi_\text{new}^{i_n}(a^{i_n}|s) \Big]
    \\\ge&
    \\
    \mathbb{E}_{a^{i_n}\sim\bar\pi^{i_n}}&\Big[ 
    \mathbb{E}_{\boldsymbol{\pi}_\text{new}^{i_{1:n-1}},\boldsymbol{\pi}_\text{old}^{-i_{1:n}}}\Big[ \boldsymbol{Q}_{\boldsymbol{\pi_\text{old}}}(s, \boldsymbol{a}^{-i_{n}}, a^{i_n}) \Big] 
    \\-&
    \alpha \log\frac{\bar\pi^{i_n}(a^{i_n}|s)}{\mu^{i_n}(a^{i_n}|s)}  - \beta \log \bar\pi^{i_n}(a^{i_n}|s) \Big]
\end{align*}
Subtracting both sides of the inequality by $\mathbb{E}_{\boldsymbol{\pi}_\text{new}^{i_{1:n-1}},\boldsymbol{\pi}_\text{old}^{-i_{1:n-1}}}\Big[ \boldsymbol{Q}_{\boldsymbol{\pi_\text{old}}}(s, \boldsymbol{a}^{-i_{n}}, a^{i_n}) \Big] $ gives
\begin{align}\label{eq: use in policy improvement}
    \mathbb{E}_{a^{i_n}\sim\pi_\text{new}^{i_n}}&\Big[ 
    \mathbb{E}_{\boldsymbol{\pi}_\text{new}^{i_{1:n-1}}}\Big[ \boldsymbol{A}^{i_{n}}_{\boldsymbol{\pi_\text{old}}}(s, \boldsymbol{a}^{i_{1:n-1}}, a^{i_n}) \Big] 
    \nonumber\\-&
    \alpha \log\frac{\pi_\text{new}^{i_n}(a^{i_n}|s)}{\mu^{i_n}(a^{i_n}|s)}  - \beta \log \pi_\text{new}^{i_n}(a^{i_n}|s) \Big]
    \nonumber\\\ge&
    \nonumber\\
    \mathbb{E}_{a^{i_n}\sim\bar\pi^{i_n}}&\Big[ 
    \mathbb{E}_{\boldsymbol{\pi}_\text{new}^{i_{1:n-1}}}\Big[ \boldsymbol{A}^{i_n}_{\boldsymbol{\pi_\text{old}}}(s, \boldsymbol{a}^{i_{1:n-1}}, a^{i_n}) \Big] 
    \nonumber\\-&
    \alpha \log\frac{\bar\pi^{i_n}(a^{i_n}|s)}{\mu^{i_n}(a^{i_n}|s)}  - \beta \log \bar\pi^{i_n}(a^{i_n}|s) \Big],
\end{align}
where
\begin{align*}
    \boldsymbol{A}^{i_n}_{\boldsymbol{\pi_\text{old}}}(s, \boldsymbol{a}^{i_{1:n-1}}, a^{i_n}) &\triangleq \mathbb{E}_{\boldsymbol{\pi}_\text{old}^{-i_{1:n}}}\Big[ \boldsymbol{Q}_{\boldsymbol{\pi_\text{old}}}(s, \boldsymbol{a}^{-i_{n}}, a^{i_n}) \Big]
    \\-&
    \mathbb{E}_{\boldsymbol{\pi}_\text{old}^{-i_{1:n-1}}}\Big[ \boldsymbol{Q}_{\boldsymbol{\pi_\text{old}}}(s, \boldsymbol{a}^{-i_{n}}, a^{i_n}) \Big].
\end{align*}
Combining this inequality (\ref{eq: use in policy improvement}) with Lemma~\ref{lemma: Multi-Agent Advantage Decomposition}, we have:
\begin{align*}
    &\mathbb{E}_{\boldsymbol{a}\sim\boldsymbol{\pi}_\text{new}}\Big[ 
    \boldsymbol{A}_{\boldsymbol{\pi}_\text{old}}(s,\boldsymbol{a}) 
    - 
    \alpha \log\frac{\boldsymbol{\pi}_\text{new}(\boldsymbol{a}|s)}{\boldsymbol{\mu}(\boldsymbol{a}|s)}  - \beta \log \boldsymbol{\pi}_\text{new}(\boldsymbol{a}|s) \Big]
    \\=&
    \sum_{n=1}^{N}
    \mathbb{E}_{\boldsymbol{a}^{i_{1:n-1}}\sim\boldsymbol{\pi}^{i_{1:n-1}}_\text{new}, a^{i_{n}}\sim{\pi}^{i_{n}}_\text{new}}\Big[ 
    \boldsymbol{A}^{i_n}_{\boldsymbol{\pi}_\text{old}}(s,\boldsymbol{a}^{i_{1:n-1}}, a^{i_n}) 
    \\
    &- 
    \alpha \log\frac{\pi^{i_n}_\text{new}(a^{i_n}|s)}{{\mu}^{i_n}(a^{i_n}|s)}  - \beta \log \pi^{i_n}_\text{new}(a^{i_n}|s) \Big]
    \\\ge&
    \sum_{n=1}^{N}
    \mathbb{E}_{\boldsymbol{a}^{i_{1:n-1}}\sim\boldsymbol{\pi}^{i_{1:n-1}}_\text{new}, a^{i_{n}}\sim{\bar\pi}^{i_{n}}}\Big[ 
    \boldsymbol{A}^{i_n}_{\boldsymbol{\pi}_\text{old}}(s,\boldsymbol{a}^{i_{1:n-1}}, a^{i_n}) 
    \\
    &- 
    \alpha \log\frac{\bar\pi^{i_n}(a^{i_n}|s)}{{\mu}^{i_n}(a^{i_n}|s)}  - \beta \log \bar\pi^{i_n}(a^{i_n}|s) \Big]
    \\=&
    \mathbb{E}_{\boldsymbol{a}\sim\boldsymbol{\bar\pi}}\Big[ 
    \boldsymbol{A}_{\boldsymbol{\pi}_\text{old}}(s,\boldsymbol{a}) 
    - 
    \alpha \log\frac{\boldsymbol{\bar\pi}(\boldsymbol{a}|s)}{\boldsymbol{\mu}(\boldsymbol{a}|s)}  - \beta \log \boldsymbol{\bar\pi}(\boldsymbol{a}|s) \Big].
\end{align*}
The resulting inequality can be equivalently rewritten as
\begin{align*}
    &\mathbb{E}_{\boldsymbol{a}\sim\boldsymbol{\pi}_\text{new}}\Big[ 
    \boldsymbol{Q}_{\boldsymbol{\pi}_\text{old}}(s,\boldsymbol{a}) 
    - 
    \alpha \log\frac{\boldsymbol{\pi}_\text{new}(\boldsymbol{a}|s)}{\boldsymbol{\mu}(\boldsymbol{a}|s)}  - \beta \log \boldsymbol{\pi}_\text{new}(\boldsymbol{a}|s) \Big]
    \\
    &\ge
    \mathbb{E}_{\boldsymbol{a}\sim\boldsymbol{\bar\pi}}\Big[ 
    \boldsymbol{Q}_{\boldsymbol{\pi}_\text{old}}(s,\boldsymbol{a}) 
    - 
    \alpha \log\frac{\boldsymbol{\bar\pi}(\boldsymbol{a}|s)}{\boldsymbol{\mu}(\boldsymbol{a}|s)}  - \beta \log \boldsymbol{\bar\pi}(\boldsymbol{a}|s) \Big],
\end{align*}
which contradicts the claim $L_{\boldsymbol{\pi}_\text{old}}(\boldsymbol{\bar\pi}) > L_{\boldsymbol{\pi}_\text{old}}(\boldsymbol{\pi}_\text{new})$. 
Hence, we have $\boldsymbol{\pi}_\text{new} = \arg\max_{\boldsymbol{\pi}} L_{\boldsymbol{\pi}_\text{old}}(\boldsymbol{\pi})$, which gives
\begin{align*}
    &\mathbb{E}_{\boldsymbol{a}\sim\boldsymbol{\pi}_\text{new}}\Big[ 
    \boldsymbol{Q}_{\boldsymbol{\pi}_\text{old}}(s,\boldsymbol{a}) 
    - 
    \alpha \log\frac{\boldsymbol{\pi}_\text{new}(\boldsymbol{a}|s)}{\boldsymbol{\mu}(\boldsymbol{a}|s)}  - \beta \log \boldsymbol{\pi}_\text{new}(\boldsymbol{a}|s) \Big]
    \\
    &\ge
    \mathbb{E}_{\boldsymbol{a}\sim\boldsymbol{\pi}_\text{old}}\Big[ 
    \boldsymbol{Q}_{\boldsymbol{\pi}_\text{old}}(s,\boldsymbol{a}) 
    - 
    \alpha \log\frac{\boldsymbol{\pi}_\text{old}(\boldsymbol{a}|s)}{\boldsymbol{\mu}(\boldsymbol{a}|s)}  - \beta \log \boldsymbol{\pi}_\text{old}(\boldsymbol{a}|s) \Big]
    \\&=
    \boldsymbol{V}_{\boldsymbol{\pi}_\text{old}}(s)
\end{align*}
Therefore, we have
\begin{align*}
    &\boldsymbol{Q}_{\boldsymbol{\pi}_\text{old}}(s,\boldsymbol{a}) = 
    r(s, \boldsymbol{a}) + \gamma   \mathbb{E}_{s^\prime|s, \boldsymbol{a}}\big[ \boldsymbol{V}_{\boldsymbol{\pi}_\text{old}}(s^\prime) \big]
    \\\le&
    r(s, \boldsymbol{a}) + \gamma   \mathbb{E}_{s^\prime|s, \boldsymbol{a}}\Big[ 
    \\&
    \mathbb{E}_{\boldsymbol{a}\sim\boldsymbol{\pi}_\text{new}}\Big[ 
    \boldsymbol{Q}_{\boldsymbol{\pi}_\text{old}}(s^\prime,\boldsymbol{a}) 
    - 
    \alpha \log\frac{\boldsymbol{\pi}_\text{new}(\boldsymbol{a}|s^\prime)}{\boldsymbol{\mu}(\boldsymbol{a}|s^\prime)}  - \beta \log \boldsymbol{\pi}_\text{new}(\boldsymbol{a}|s^\prime) \Big]
    \Big]
    \\\vdots&
    \\\le&
    \boldsymbol{Q}_{\boldsymbol{\pi}_\text{new}}(s,\boldsymbol{a}), \quad \forall{s, \boldsymbol{a}}
\end{align*}
And then, 
\begin{align*}
    &\boldsymbol{V}_{\boldsymbol{\pi}_\text{old}}(s) 
    \\&= 
    \mathbb{E}_{\boldsymbol{a}\sim\boldsymbol{\pi}_\text{old}}\Big[ 
    \boldsymbol{Q}_{\boldsymbol{\pi}_\text{old}}(s,\boldsymbol{a}) 
    - 
    \alpha \log\frac{\boldsymbol{\pi}_\text{old}(\boldsymbol{a}|s)}{\boldsymbol{\mu}(\boldsymbol{a}|s)}  - \beta \log \boldsymbol{\pi}_\text{old}(\boldsymbol{a}|s) \Big]
    \\&\le
    \mathbb{E}_{\boldsymbol{a}\sim\boldsymbol{\pi}_\text{new}}\Big[ 
    \boldsymbol{Q}_{\boldsymbol{\pi}_\text{old}}(s,\boldsymbol{a}) 
    - 
    \alpha \log\frac{\boldsymbol{\pi}_\text{new}(\boldsymbol{a}|s)}{\boldsymbol{\mu}(\boldsymbol{a}|s)}  - \beta \log \boldsymbol{\pi}_\text{new}(\boldsymbol{a}|s) \Big]
    \\&\le
    \mathbb{E}_{\boldsymbol{a}\sim\boldsymbol{\pi}_\text{new}}\Big[ 
    \boldsymbol{Q}_{\boldsymbol{\pi}_\text{new}}(s,\boldsymbol{a}) 
    - 
    \alpha \log\frac{\boldsymbol{\pi}_\text{new}(\boldsymbol{a}|s)}{\boldsymbol{\mu}(\boldsymbol{a}|s)}  - \beta \log \boldsymbol{\pi}_\text{new}(\boldsymbol{a}|s) \Big]
    \\&=
    \boldsymbol{V}_{\boldsymbol{\pi}_\text{new}}(s),\quad \forall{s}.
\end{align*}
Thus, the proof is completed.

\end{proof}

\begin{lemma}[Multi-Agent Advantage Decomposition from \citeauthor{HAPPO/KubaCWWSW022}]\label{lemma: Multi-Agent Advantage Decomposition}
    For any state s and joint action, the joint advantage function can be decomposed as:
    \begin{align*}
        \boldsymbol{A}_{\boldsymbol{\pi}}(s, \boldsymbol{a})
        =
        \sum_{n=1}^{N}\boldsymbol{A}^{i_n}_{\boldsymbol{\pi}}(s, \boldsymbol{a}^{i_{1:n-1}}, a^{i_n})
    \end{align*}.
\end{lemma}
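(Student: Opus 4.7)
The plan is a short telescoping argument in the spirit of the original HAPPO derivation. Introduce an auxiliary family of partially-averaged Q-functions indexed by prefix length $k \in \{0, 1, \ldots, N\}$ along the fixed permutation $i_{1:N}$:
$$Q^{i_{1:k}}_{\boldsymbol{\pi}}(s, a^{i_{1:k}}) \triangleq \mathbb{E}_{\boldsymbol{a}^{-i_{1:k}} \sim \boldsymbol{\pi}^{-i_{1:k}}}\bigl[\boldsymbol{Q}_{\boldsymbol{\pi}}(s, a^{i_{1:k}}, \boldsymbol{a}^{-i_{1:k}})\bigr].$$
The two boundary cases are $Q^{i_{1:N}}_{\boldsymbol{\pi}}(s, \boldsymbol{a}) = \boldsymbol{Q}_{\boldsymbol{\pi}}(s, \boldsymbol{a})$ and $Q^{i_{1:0}}_{\boldsymbol{\pi}}(s) = \mathbb{E}_{\boldsymbol{a}\sim\boldsymbol{\pi}}[\boldsymbol{Q}_{\boldsymbol{\pi}}(s, \boldsymbol{a})]$, the latter coinciding with the state value appearing in the (Q-only) joint advantage $\boldsymbol{A}_{\boldsymbol{\pi}}(s, \boldsymbol{a}) = \boldsymbol{Q}_{\boldsymbol{\pi}}(s, \boldsymbol{a}) - \mathbb{E}_{\boldsymbol{a}'\sim\boldsymbol{\pi}}[\boldsymbol{Q}_{\boldsymbol{\pi}}(s, \boldsymbol{a}')]$. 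The MEBR-MG entropy and KL terms enter the $V_{\boldsymbol{\pi}}$ used in the policy-evaluation section, but they are subtracted separately from the advantage when this lemma is invoked in the policy-improvement proof, so this Q-only advantage is the correct object to decompose here.

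Next I would identify the per-agent advantage defined in the policy-improvement proof as a one-step increment of this family. Expanding its definition, the first expectation holds the prefix $a^{i_{1:n-1}}$ and $a^{i_n}$ both fixed and averages the tail $i_{n+1:N}$ under $\boldsymbol{\pi}$, which is exactly $Q^{i_{1:n}}_{\boldsymbol{\pi}}(s, a^{i_{1:n}})$; the second expectation fixes only the prefix $a^{i_{1:n-1}}$ and integrates $a^{i_n}$ together with the tail against $\boldsymbol{\pi}^{-i_{1:n-1}}$, which is $Q^{i_{1:n-1}}_{\boldsymbol{\pi}}(s, a^{i_{1:n-1}})$ and no longer depends on $a^{i_n}$. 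Hence
$$\boldsymbol{A}^{i_n}_{\boldsymbol{\pi}}(s, \boldsymbol{a}^{i_{1:n-1}}, a^{i_n}) = Q^{i_{1:n}}_{\boldsymbol{\pi}}(s, a^{i_{1:n}}) - Q^{i_{1:n-1}}_{\boldsymbol{\pi}}(s, a^{i_{1:n-1}}).$$

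Finally, summing over $n = 1, \ldots, N$ collapses the series:
$$\sum_{n=1}^{N} \boldsymbol{A}^{i_n}_{\boldsymbol{\pi}}(s, \boldsymbol{a}^{i_{1:n-1}}, a^{i_n}) = Q^{i_{1:N}}_{\boldsymbol{\pi}}(s, \boldsymbol{a}) - Q^{i_{1:0}}_{\boldsymbol{\pi}}(s) = \boldsymbol{A}_{\boldsymbol{\pi}}(s, \boldsymbol{a}).$$
The identity is pointwise in $\boldsymbol{a}$ and holds for any choice of permutation $i_{1:N}$. There is no real obstacle; the proof is bookkeeping. The only care required is notational: in the paper's two-term definition of $\boldsymbol{A}^{i_n}_{\boldsymbol{\pi}}$, the symbol $\boldsymbol{a}^{-i_n}$ has different ``fixed versus integrated'' status for the coordinate $a^{i_n}$ across the two expectations, and the cleanest way to make this unambiguous is to restate both terms through the $Q^{i_{1:k}}_{\boldsymbol{\pi}}$ family above, after which the telescoping collapse is immediate.
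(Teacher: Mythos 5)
Your proposal is correct and takes essentially the same route as the paper's own proof, which likewise writes $\boldsymbol{A}_{\boldsymbol{\pi}}(s,\boldsymbol{a}) = \boldsymbol{Q}_{\boldsymbol{\pi}}(s,\boldsymbol{a}) - \boldsymbol{V}_{\boldsymbol{\pi}}(s)$ and expands it directly as the telescoping sum $\sum_{n=1}^{N}\big(\mathbb{E}_{\boldsymbol{\pi}^{-i_{1:n}}}[\boldsymbol{Q}_{\boldsymbol{\pi}}(s,\boldsymbol{a}^{-i_n},a^{i_n})] - \mathbb{E}_{\boldsymbol{\pi}^{-i_{1:n-1}}}[\boldsymbol{Q}_{\boldsymbol{\pi}}(s,\boldsymbol{a}^{-i_n},a^{i_n})]\big)$, i.e., exactly your one-step increments $Q^{i_{1:n}}_{\boldsymbol{\pi}} - Q^{i_{1:n-1}}_{\boldsymbol{\pi}}$ of the prefix-averaged family. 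Your explicit $Q^{i_{1:k}}_{\boldsymbol{\pi}}$ notation merely makes the bookkeeping cleaner, and your side remark that the object decomposed must be the Q-only advantage $\boldsymbol{Q}_{\boldsymbol{\pi}} - \mathbb{E}_{\boldsymbol{a}'\sim\boldsymbol{\pi}}[\boldsymbol{Q}_{\boldsymbol{\pi}}]$ (with the KL and entropy regularizers handled separately in the policy-improvement proof) correctly pins down a notational looseness that the paper's terse $\boldsymbol{Q} - \boldsymbol{V}$ step glosses over.
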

\begin{proof}
    By the definition of multi-agent advantage function, we have:
    \begin{align*}
        \boldsymbol{A}_{\boldsymbol{\pi}}(s, \boldsymbol{a})
        =&
        \boldsymbol{Q}_{\boldsymbol{\pi}}(s, \boldsymbol{a}) - \boldsymbol{V}_{\boldsymbol{\pi}}(s)
        \\=&
        \sum_{n=1}^{N}\Big[
        \mathbb{E}_{\boldsymbol{\pi}^{-i_{1:n}}}\Big[ \boldsymbol{Q}_{\boldsymbol{\pi}}(s, \boldsymbol{a}^{-i_{n}}, a^{i_n}) \Big]
        \\&-
        \mathbb{E}_{\boldsymbol{\pi}^{-i_{1:n-1}}}\Big[ \boldsymbol{Q}_{\boldsymbol{\pi}}(s, \boldsymbol{a}^{-i_{n}}, a^{i_n}) \Big]
        \Big]
        \\=&
        \sum_{n=1}^{N}\boldsymbol{A}^{i_n}_{\boldsymbol{\pi}}(s, \boldsymbol{a}^{i_{1:n-1}}, a^{i_n})
    \end{align*}.
\end{proof}

\subsection{Proof of QRE convergence}

\begin{theorem}\label{re thrm: converge to QRE}
    In the tabular setting, the joint policy $\boldsymbol{\pi}$ updated by InSPO converges to QRE. 
\end{theorem}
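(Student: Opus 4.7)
The plan is to combine the monotone improvement result from Proposition \ref{re prop: policy improvement} with compactness and continuity to show that cluster points of $\{\boldsymbol{\pi}_k\}$ are fixed points of one sequential sweep, and then identify any such fixed point with a QRE via Proposition \ref{re prop: representation of optimal policy}. In the tabular setting, the in-sample update (\ref{eq: in-sample 2}) realises the exact Boltzmann best response for each agent, so a full sweep becomes a deterministic continuous map on the compact joint policy simplex.

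First, iteratively applying Proposition \ref{re prop: policy improvement} gives $V_{\boldsymbol{\pi}_{k+1}}(s)\ge V_{\boldsymbol{\pi}_k}(s)$ for every $s$. Since the reward is bounded, $\gamma<1$, and both the reverse-KL term and the entropy term are uniformly bounded on the simplex of the finite joint action space, $V_{\boldsymbol{\pi}_k}$ is bounded above, so the monotone convergence theorem yields $V_{\boldsymbol{\pi}_k}(s)\uparrow V_*(s)$ pointwise. Because each local policy lies in the compact simplex $\Delta(\mathcal{A}^i)$, Bolzano--Weierstrass extracts a subsequence $\boldsymbol{\pi}_{k_j}\to\boldsymbol{\pi}_*$, and Lemma \ref{lemma: policy evaluation} together with the continuous dependence of a contraction's fixed point on its operator gives $\boldsymbol{Q}_{\boldsymbol{\pi}_{k_j}}\to\boldsymbol{Q}_{\boldsymbol{\pi}_*}$ and hence $V_{\boldsymbol{\pi}_*}=V_*$.

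Second, I would pass to a further subsequence along which $\boldsymbol{\pi}_{k_j+1}\to\boldsymbol{\pi}_{**}$. By continuity of the Boltzmann update in $(\boldsymbol{\pi},\boldsymbol{Q})$, $\boldsymbol{\pi}_{**}$ is the image of $\boldsymbol{\pi}_*$ under one full sequential sweep (the random permutation is immaterial because the QRE condition is order-free and is checked agent-by-agent). Moreover, $V_{\boldsymbol{\pi}_{**}}=V_*=V_{\boldsymbol{\pi}_*}$. Since each per-agent subproblem in (\ref{re eq: sequential-update objective 1}) is strictly concave whenever $\alpha+\beta>0$ (both KL against $\boldsymbol{\mu}$ and the negative entropy are strictly convex in $\pi^{i_n}$ on the simplex), its maximiser is unique; walking through the $N$ intra-sweep updates one at a time and using this uniqueness lets me upgrade equality of aggregated values to equality of each intermediate policy, forcing $\boldsymbol{\pi}_{**}=\boldsymbol{\pi}_*$. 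Thus $\boldsymbol{\pi}_*$ is a fixed point of the sweep: each $\pi^i_*$ already maximises the single-agent regularised objective given $\boldsymbol{\pi}_*^{-i}$, which by Proposition \ref{re prop: representation of optimal policy} is exactly the QRE Boltzmann condition in Eq.(\ref{re eq: QRE}).

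The step I expect to be the main obstacle is the ``equality of values implies fixed point'' chaining in the previous paragraph, since Proposition \ref{re prop: policy improvement} is only stated as a weak inequality. I would handle it by revisiting the chain of inequalities in its proof and showing that, at each of the $N$ intra-sweep updates, any non-trivial deviation would produce a strict gain in the single-agent regularised objective and therefore in $V$, contradicting $V_{\boldsymbol{\pi}_{**}}=V_{\boldsymbol{\pi}_*}$. Note that the theorem's statement of ``converges to QRE'' is compatible with every cluster point being a QRE; strengthening this to full-sequence convergence would additionally require uniqueness of the QRE consistent with $V_*$, which is not needed for the claim as stated.
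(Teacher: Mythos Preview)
Your proposal follows the same skeleton as the paper's proof: invoke Proposition~\ref{re prop: policy improvement} to get monotone improvement of the (bounded) value function, deduce existence of a limit point, and then verify that at the limit each agent's policy already satisfies the per-agent best-response inequality, which is the QRE condition of Proposition~\ref{re prop: representation of optimal policy}. The paper's argument is considerably terser---it simply asserts that the bounded monotone $Q$-sequence forces policy convergence and that the limit satisfies the best-response inequality---whereas you correctly flesh out the compactness/subsequence extraction, the continuity of policy evaluation and of the Boltzmann update, and the strict-concavity step needed to turn equality of values into a genuine fixed point; these are exactly the details the paper omits, so your version is in fact more complete than the original while following the same route.
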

\begin{proof}
First, we have that $\boldsymbol{Q}_{\boldsymbol{\pi}_{k+1}}(s, \boldsymbol{a}) \ge \boldsymbol{Q}_{\boldsymbol{\pi}_{k}}(s, \boldsymbol{a})$ by Proposition~\ref{re prop: policy improvement} and that the Q-function is upper-bounded since reward and regularizations are bounded. Hence, the sequence of policies converges to some limit point $\boldsymbol{\bar \pi}$. 

Then, considering this limit point joint policy $\boldsymbol{\bar \pi}$, it must be the case that $\forall{i, \pi^{i}}$,
\begin{align*}
    \mathbb{E}_{a^{i}\sim\bar \pi^{i}}&\Big[ 
    \mathbb{E}_{\boldsymbol{\bar \pi}^{-i}}\Big[ \boldsymbol{Q}_{\boldsymbol{\bar \pi}}(s, \boldsymbol{a}^{-i}, a^{i}) \Big] 
    \\-&
    \alpha \log\frac{\bar \pi^{i}(a^{i}|s)}{\mu^{i}(a^{i}|s)}  - \beta \log \bar\pi^{i}(a^{i}|s) \Big]
    \\\ge&
    \\
    \mathbb{E}_{a^{i}\sim \pi^{i}}&\Big[ 
    \mathbb{E}_{\boldsymbol{\bar \pi}^{-i}}\Big[ \boldsymbol{Q}_{\boldsymbol{\bar \pi}}(s, \boldsymbol{a}^{-i}, a^{i}) \Big] 
    \\-&
    \alpha \log\frac{ \pi^{i}(a^{i}|s)}{\mu^{i}(a^{i}|s)}  - \beta \log \pi^{i}(a^{i}|s) \Big],
\end{align*}
which is equivalent with $V_{\boldsymbol{\bar\pi}}(s) \ge V_{\pi^i, \boldsymbol{\bar\pi}^{-i}}(s)$. 
Thus, $\boldsymbol{\bar \pi}$ is a quantal response equilibrium, which finishes the proof.

\end{proof}


\section{Details of Practical Algorithm}


In the practical implementation of InSPO, we train the policy network $\theta^{i_n}$ by loss function
\begin{align}\label{re eq: final in-sample}
    &J(\theta^{i_n}) \triangleq
    \mathbb{E}_{(s,a^{i_n})\sim\mathcal{D}_{\rho^{i_n}}}\Big[
    \nonumber\\&
    -\exp\Big(
    \frac{A_{\bar\phi^{i_n}}(s, a^{i_n}) - \beta \log \mu^{i_n}(a^{i_n}|s)}{\alpha+\beta}
    \Big)
    \log \pi_{\theta^{i_n}}(a^{i_n}|s)
    \Big],
\end{align}
where
\begin{align*}
    A_{\bar\phi^{i_n}}(s, a^{i_n})
    \triangleq
    Q_{\bar\phi^{i_n}}(s, a^{i_n}) - 
    \mathbb{E}_{\pi_{\theta^{i_n}}} [ Q_{\bar\phi^{i_n}}(s, a^{i_n}) ],
\end{align*}
and $\bar\phi^{i_n}$ is the soft-target network of $\phi^{i_n}$. 

The behavior policy $\mu^{i_n}$ in $J(\theta^{i_n})$ is pre-trained by the Behavior Cloning
\begin{align}\label{eq: BC}
    \min_{\mu^{i_n}}\mathbb{E}_{(s, a^{i_n})\sim\mathcal{D}}\Big[
    -\log \mu^{i_n}(a^{i_n}|s)
    \Big]. 
\end{align}

Instead of using the $\mu^{i_n}$ trained by Eq.(\ref{eq: BC}) to calculate $\boldsymbol{\mu}^{-i_n}$ in the computation of importance resampling ratio $\rho^{i_n}$, we pre-train an MLP-based autoregressive behavior policy, same with the one in \citeauthor{alberdice/MatsunagaLYLAK23}, for numerically stability. 
The optimization objective of the autoregressive behavior policy is given by
\begin{align}\label{eq: auto behavior}
    \min_{\boldsymbol{\mu}^{-i}}
    \mathbb{E}_{(s, \boldsymbol{a})\sim\mathcal{D}}\Big[ -\sum_{j=1, j\neq i}^{N}\log \boldsymbol{\mu}^{-i}(a^j|s, a^i, \boldsymbol{a}^{1:i-1}) \Big]. 
\end{align}
Then, in the computation of $\rho^{i_n}$, we use the geometric mean to prevent the collapse of $\rho^{i_n}$ due to the growth of the number of agents: 
\begin{align}\label{eq: IR ratio}
    \rho^{i_n} = \Big(\frac{\boldsymbol{\pi}^{-{i_n}}(\boldsymbol{a}^{-i_n}|s)}{\boldsymbol{\mu}^{-{i_n}}(\boldsymbol{a}^{-i_n}|s)}\Big)^{\frac{1}{N-1}} . 
\end{align}

In order to estimate the future return based on state-action pair, we train the local Q-function network by minimizing the temporal difference (TD) error with a CQL regularization term to further penalize OOD action values:
\begin{align}\label{eq: local q-function}
    J(\phi^{i_n}) &\triangleq
    \mathbb{E}_{(s,\boldsymbol{a}, s^\prime, r)\sim\mathcal{D}_{\rho^{i_n}}}
    \Big[
    \Big(
    Q_{\phi^{i_n}}(s, a^{i_n}) - 
    y
    \Big)^2\Big]
    \nonumber\\&+
    \alpha_\text{CQL} 
    \mathbb{E}_{s\sim\mathcal{D}_{\rho^{i_n}}}
    \Big[
    \log\sum_{a^{i_n}}\exp( Q_{\phi^{i_n}}(s, a^{i_n}) )
    \nonumber\\&- 
    \mathbb{E}_{a^{i_n}\sim\mu^{i_n}} \big[ Q_{\phi^{i_n}}(s, a^{i_n}) \big]
    \Big],
\end{align}
where
\begin{align*}
    y =& y(s,\boldsymbol{a}, s^\prime, r) \triangleq r + \gamma \mathbb{E}_{a^{i_{n}\prime}\sim\pi_\text{old}^{i_n}} \big[Q_{{\bar{\phi}}^{i_n}}(s^{\prime}, a^{i_n \prime}) \big]
    \\&-
    \alpha D_\text{KL}(\pi_{\theta^{i_n}}(\cdot | s^\prime), \mu^{i_n}(\cdot | s^\prime)) + \beta \mathcal{H}(\pi_{\theta^{i_n}}(\cdot | s^\prime)). 
\end{align*}
Here we use $\alpha_\text{CQL}=0.1$ as the default value. 

Lastly, we give the loss function of auto-tuned $\alpha$, which is inspired by the auto-tuned temperature extension of SAC~\cite{autoSAC}:
\begin{align}
    J(\alpha) \triangleq
    \mathbb{E}_{s\sim\mathcal{D}}\Big[\sum_{i\in\mathcal{N}}\Big( \alpha D_\text{KL}( \pi^{i}(\cdot|s), \mu^{i}(\cdot|s)) - \alpha \bar D_\text{KL}\Big)\Big],
\end{align}
where $\bar D_\text{KL}$ is the target value with the default value $0.18$.


\section{Experimental Details}

\begin{table*}[ht!]
    \small
    \centering
    \begin{tabular}{c c|c c c}
         Map
         &Dataset
         &random
         &fixed
         &semi-greedy
         \\
         \hline
         \multirow{4}*{5m\_vs\_6m}
         &medium
         &$0.28\pm 0.06$
         &$0.29\pm 0.06$
         &$0.28\pm 0.03$
         \\
         ~
         &medium-replay
         &$0.24\pm 0.07$
         &$0.25\pm 0.10$
         &$0.27\pm 0.04$
         \\
         ~
         &expert
         &$0.79\pm 0.12$
         &$0.84\pm 0.05$
         &$0.81\pm 0.05$
         \\
         ~
         &mixed
         &$0.78\pm 0.06$
         &$0.78\pm 0.05$
         &$0.78\pm 0.08$
         \\
         \hline
         \multirow{4}*{6h\_vs\_8z}
         &medium
         &$0.43\pm0.06$
         &$0.39\pm 0.07$
         &$0.42\pm 0.11$
         \\
         ~
         &medium-replay
         &$0.23\pm 0.02$
         &$0.23\pm 0.04$
         &$0.17\pm 0.08$
         \\
         ~
         &expert
         &$0.74\pm 0.11$
         &$0.72\pm 0.07$
         &$0.71\pm 0.09$
         \\
         ~
         &mixed
         &$0.60\pm 0.12$
         &$0.62\pm 0.10$
         &$0.65\pm 0.11$
    \end{tabular}
    \caption{Impact of update order. }
    \label{tab: impact of update order}
\end{table*}

\subsection{Baselines}

\textbf{BC}, \textbf{OMAR} and \textbf{CFCQL}: We use the open-source implementation~\footnote{https://github.com/thu-rllab/CFCQL\label{cfcql}} provided by \citeauthor{cfcql/ShaoQCZJ23}. 
\textbf{AlberDICE}: We use the open-source implementation~\footnote{https://github.com/dematsunaga/alberdice\label{alberdice}} provided by \citeauthor{alberdice/MatsunagaLYLAK23}. 
\textbf{OMIGA}: We use the open-source implementation~\footnote{https://github.com/ZhengYinan-AIR/OMIGA} provided by \citeauthor{omiga/WangXZZ23}.

\subsection{Dataset Details}

\paragraph{Bridge. } 
We use the datasets~\textsuperscript{\ref{alberdice}} provided \citeauthor{alberdice/MatsunagaLYLAK23}.
The optimal dataset (500 trajectories) is collected by a hand-crafted (multi-modal) optimal policy, and the mixed dataset (500 trajectories) is the equal mixture of the optimal dataset and 500 trajectories collected by a uniform random policy.

\paragraph{StarCraft II. } 
We use the datasets~\textsuperscript{\ref{cfcql}} provided \citeauthor{cfcql/ShaoQCZJ23}, which are collected by QMIX~\cite{QMIX/RashidSWFFW18}. 
The medium dataset (5000 trajectories) is collected by a partially trained model of QMIX, and the expert dataset (5000 trajectories) is collected by a fully trained model. 
The mixed dataset (5000 trajectories) is the equal mixture of medium and expert datasets. 
The medium-replay dataset (5000 trajectories) is the replay buffer during training until the policy reaches the medium performance.

\subsection{Resources}

We run all the experiments on 4*NVIDIA GeForce RTX 3090 GPUs and 4*NVIDIA A30. Each setting is repeated for 5 seeds. 
For one seed, it takes about 1.5 hours for StarCraft II, 1 hour for Bridge, and 15 minutes for matrix games.

\subsection{Reproducibility}

The local Q-function network is represented by 3-layer ReLU activated MLPs with 256 units for each hidden layer. 
The policy network is implemented in two ways: MLP and RNN.
For the Matrix Game and Bridge, the policy network is represented by 3-layer ReLU activated MLPs with 256 units for each hidden layer. 
For StarCraft II, the policy network consists of two linear layers of 64 units and one GRUCell layer, referring to the CFCQL implementation~\textsuperscript{\ref{cfcql}}. 
All the networks are optimized by Adam optimizer. 

For all datasets and algorithms, we run all the experiments with 1e7 training steps on 5 seeds: 0, 1, 2, 3, 4. For evaluation we rolled out policies for 32 episodes and computed the mean episode return (or winning rate). 
For OMAR, we tune a best CQL weight from $\{0, 0.1, 0.5, 1, 2, 3, 4, 5, 10\}$. 
For CFCQL, we tune a best CQL weight from $\{0, 0.1, 0.5, 1, 5, 10, 50\}$, softmax temperature from $\{0, 0.1, 0.5, 1, 100\}$. 
For OMIGA, we tune a best regularization temperature from $\{0.1, 0.5, 1, 3, 5, 7, 10 \}$. 
For InSPO, we tune a best $\alpha$ from $\{0.1, 0.5, 1, 3, 5 \}$, $\bar D_\text{KL}$ from $\{0.08, 0.18, 0.3\}$ and an exponentially decaying $\beta$ from $\{0, 5, 10\}$.

\subsection{Impact of Update Order}

Update order might impact performance~\cite{MAS_via_communication, orderMatter}. \citeauthor{MAS_via_communication} used a world model to determine the order by comparing the value of intentions. \citeauthor{orderMatter} introduced a semi-greedy agent selection rule, which prioritizes updating agents with a higher expected advantage. However, they are heuristic and may not guarantee optimal results. To investigate this, we add an ablation study, comparing ‘random’, ‘fixed’, and semi-greedy update rule in terms of their impact on InSPO's performance. Table~\ref{tab: impact of update order} shows that the effect of update order on performance is not significant, but determining the optimal update sequence is still an interesting direction for future work. 

\subsection{Training Efficiency}

The sequential updates can increase training times, which is a common issue with this method. 
Here we add a comparison of training times between InSPO and CFCQL in Table~\ref{tab: training efficiency}, which shows that sequential updates do increase the training time. 
However, some work~\cite{orderMatter} aim to address this issue by grouping agents into blocks for simultaneous updates within blocks, with sequential updates between blocks. 
\begin{table}[ht]
    \small
    \centering
    \begin{tabular}{c|c c}
         map
         &CFCQL
         &InSPO
         \\
         \hline
         2s3z
         &$1$h
         &$1.75$h
         \\
         3s\_vs\_5z
         &$1$h
         &$1.2$h
         \\
         5m\_vs\_6m
         &$0.5$h
         &$1.25$h
         \\
         6h\_vs\_8z
         &$0.75$h
         &$2.25$h
    \end{tabular}
    \caption{Comparison of training times between InSPO and CFCQL. }
    \label{tab: training efficiency}
\end{table}


\section{Value Decomposition in XOR}

In XOR game, the minimization of TD error $\mathbb{E}_{\mathcal{D}}[(\boldsymbol{Q}(a^1, a^2) - r(a^1, a^2))^2]$ motivates the global Q-network to satisfy
\begin{align}\label{eq: value decomposition in XOR}
    \boldsymbol{Q}(A, B) = \boldsymbol{Q}(B, A) > \boldsymbol{Q}(A, A).
\end{align}
According to IGM principle, if $Q^i(A) > Q^i(B)$, then $\boldsymbol{Q}(A, A) > \boldsymbol{Q}(A, B)$, and $\boldsymbol{Q}(A, A) > \boldsymbol{Q}(B, B)$, contradicting Eq.(\ref{eq: value decomposition in XOR}). 
If $Q^i(A) = Q^i(B)$, then $\boldsymbol{Q}(A, A) = \boldsymbol{Q}(A, B) = \boldsymbol{Q}(B, B)$, contradicting Eq.(\ref{eq: value decomposition in XOR}) again. 
Therefore, we have $Q^i(A) < Q^i(B)$, which leads to the OOD joint action $(B, B)$.


\section{Similar Techniques to the Concept of ``sequential"}


The idea of ``sequential" has been explored in various directions within MARL. 
Specifically, \citeauthor{MAS_via_communication} introduced SeqComm, a communication framework where agents condition their actions based on the \textit{ordered actions} of others, mitigating circular dependencies that arise in \textit{simultaneous} communication. MACPF~\cite{{MoreCenStillDec}} decomposes joint policies into individual ones, incorporating a correction term to model dependencies on preceding agents' actions in \textit{sequential execution}. BPPO~\cite{BackpropagationThroughAgents} employs an auto-regressive joint policy with a \textit{fixed execution order}. During training, agents act \textit{sequentially} based on the prior agents’ actions, and update their policies based on the feedback from subsequent agents. This bidirectional mechanism enables each agent adapts to the changing behavior of the team efficiently. 

While these works use the concept of ``sequential" to improve coordination, applying them directly in offline MARL poses challenges. Both policy and value functions in MACPF and BPPO explicitly condition on prior agents’ actions, which can be challenging in offline settings where required data may be missing, potentially hindering accurate value function updates.

\end{document}